\documentclass[a4paper,reqno]{amsart}

\usepackage{amsmath, amssymb, amsthm, eucal}
\usepackage{graphicx}
\usepackage[shortlabels]{enumitem}

\usepackage[below]{placeins} 
\usepackage{subcaption}
\usepackage{sidecap}
\usepackage{wrapfig}
\usepackage{float}

\usepackage[foot]{amsaddr}

\usepackage{hyperref}
\usepackage{verbatim}
\usepackage{pdfsync}

\usepackage{comment}

\usepackage{tikz}
\usepackage{scalefnt}
\usepackage[textheight=630pt,
textwidth=468pt,
centering]{geometry}

\usepackage{xcolor}

\newtheorem{theorem}{Theorem}

\newtheorem{lemma}[theorem]{Lemma}
\newtheorem{proposition}[theorem]{Proposition}

\newtheorem{remark}[theorem]{Remark}

\newtheorem{definition}[theorem]{Definition}

\numberwithin{equation}{section}
\numberwithin{theorem}{section}

\newcommand{\NN}{\ensuremath{\mathbb{N}}}

\newcommand{\RR}{\mathbb{R}}

\DeclareMathOperator{\tr}{tr}

\begin{document}

	\title[Gradient descent for deep linear networks]
	{Convergence of gradient descent for learning linear neural networks} 

\date{\today}


\author{Gabin Maxime Nguegnang$^1$}
\address{$^1$Chair for Mathematics of Information Processing, RWTH Aachen University, Pontdriesch 10, 52062 Aachen, Germany}
\email{nguegnang@mathc.rwth-aachen.de}

\author{Holger Rauhut$^1$}
\email{rauhut@mathc.rwth-aachen.de}

\author{Ulrich Terstiege$^1$}
\email{terstiege@mathc.rwth-aachen.de}

\maketitle

\begin{abstract}
	We study the convergence properties of gradient descent for training deep linear neural networks, i.e., deep matrix factorizations, by extending a previous analysis for the related gradient flow. 
	We show that under suitable conditions on the step sizes gradient descent  converges to a critical point of the loss function, i.e., the square loss in this article. Furthermore, we demonstrate that for almost all initializations gradient descent converges to a global minimum in the case of two layers. In the case of three or more layers we show that gradient descent converges to a global minimum on the manifold matrices of some fixed rank, where the rank cannot be determined a priori.
\end{abstract}



\section{Introduction}
Deep learning is arguably the most widely used and successful machine learning method, which has lead to spectacular breakthroughs in various domains such as image recognition, autonomous driving, machine translation, medical imaging and many more. Despite its widespread use, the understanding of the mathematical principles of deep learning is still in its infancy. A particular widely open question concerns the convergence properties of commonly used (stochastic) gradient descent (SGD) algorithms for learning a deep neural network from training data: Does (S)GD always converge to a critical point of the loss function? Does it converge to a global minimum? Does the network learned via (S)GD generalize well to unseen data?

In order to approach these questions we study gradient descent for learning a deep \textit{linear} network, i.e., a network with activation function being the identity, or in other words, learning a deep matrix factorization. While linear neural networks are not expressive enough for most practical applications, 
the theoretical study of gradient descent for linear neural networks is highly non-trivial and therefore expected to be very valuable. The difficulty in deriving mathematical convergence guarantees results from the functional being non-convex in terms of the individual matrices in the factorization. We are convinced that the  
case of linear networks should be well-understood before passing to the more difficult (but more practically relevant) case of nonlinear networks. We expect that some principles (though not all) will transfer to the nonlinear case and the mathematical analysis of the linear case will provide valuable insights.

This article is a continuation of the work started in \cite{bah2019learning}, where a theoretical analysis of the gradient flow related to learning a deep linear network via minimization of the square loss has been studied. Extending earlier contributions \cite{arora2018convergence,chitour2018geometric,AroraCohenHazan2018}, it was shown in \cite{bah2019learning} that gradient flow always converges to a critical point of the square loss. Moreover, for almost all initializations it converges to a global 
minimizer in the case of two layers. It is conjectured that this result also holds for more than two layers, but currently it is only shown in \cite{bah2019learning} that for more layers, gradient flow converges to the global minimum of the loss function restricted to the matrices of some fixed rank $k$ for almost all initializations, where unfortunately the result does not allow to determine $k$ a priori. 
As another interesting discovery, \cite{bah2019learning} identifies the flow of the product matrix resulting from the gradient flow for the individual matrices in the factorization as a Riemannian gradient flow with respect to a non-trivial and explicitly given Riemannian metric on the manifold of matrices of fixed rank $k$. This result requires that at initialization the tuple of individual matrices is \textit{balanced}, a term that the authors of \cite{AroraCohenHazan2018} introduced.
It is important to note that balancedness is preserved by the gradient flow, i.e., this property is related to the natural invariant set of the flow.

In this article, we extend the convergence analysis in \cite{bah2019learning} from gradient flow to gradient descent. 
Under certain conditions on the  step sizes, we show that the gradient descent iterations converge to a critical point of the square loss function. Moreover, for almost all initializations convergence is towards a global minimum in the case of two layers, while for more than two layers we obtain the analogue of the main result in \cite{bah2019learning} that for almost all initializations the product matrix converges to a global minimum of the square loss restricted to the manifold of rank $k$ matrices for some $k$.

The extension of the analysis from the gradient flow case to gradient descent turned out to be much more involved than one might initially expect. The reason is that the gradient descent iterations do no longer satisfy exactly the invariance property related to the balancedness. This property of the gradient flow, however, was heavily used in the convergence proof in \cite{bah2019learning}. In order to circumvent this problem, we develop an induction argument inspired by the article \cite{Du2018learning}, which covers the significantly simpler special case of two layers. The induction proof tracks, in particular, how much the balancedness condition is perturbed during the iterations. In fact, such perturbations stay bounded under suitable assumptions on the step sizes.

\medskip

Learning linear networks are currently studied also in the context of the so-called implicit bias of gradient descent and gradient flows. 
It was observed and studied empirically, for instance in \cite{gunasekar2017implicit, keskar2017,yun2021a,zhang2017}, that in the context of overparameterization, where more parameters than training samples are used, learned neural networks generalize surprisingly well. This in contrast to the intuition gained from classical statistics, which would predict overfitting of the learned network, i.e., that generalization properties should be poor. In this overparameterized setting there will usually exist many neural networks that exactly interpolate the training examples \cite{zhang2017}, hence, leading to the global minimum zero for the empirical loss. In particular, a global minimizer is by far not unique. This means that the used learning algorithm, mostly variants of (stochastic) gradient descent, will induce an implicit bias on the learned neural network \cite{gunasekar2017implicit,zhang2017}. As a result, one possible explanation of the phenomenon of the good generalization property of overparameterized learned neural networks is that the implicit bias of (stochastic) gradient descent is towards to solutions of low complexity in a suitable sense, resulting in good generalization. While a theoretical analysis of this phenomenon seems difficult for nonlinear networks, first works for linear networks indicate that gradient descent leads to linear networks (factorized matrices) of low rank \cite{arora2018convergence,chou2021,geyer2019implicit,gunasekar2017implicit,razin2020implicit}, although many open questions remain. Another important role seems to be played by the random initialization, see e.g.\ \cite{du2018gradient,shamir2019exponential}. 

We expect that the convergence analysis of gradient descent performed in our paper, will be a useful tool also for the detailed analysis of the implicit bias of (stochastic) gradient descent in learning deep overparameterized neural networks.

{Convergence of the stochastic subgradient method to a critical point has been established in \cite{davis2020stochastic}. This result requires the subgradient sequence to be bounded and that the cost function should be strictly decreasing along any trajectory of the differential inclusion proceeding from a noncritical point. In  addition, the authors of \cite{davis2020stochastic} commented that the  boundedness of the iterates may be enforced by assuming that the constraint set on which the set valued map is defined is bounded or by a proper choice of a regularizer. In contrast, we do not require these conditions. We rather prove the boundedness of the gradient descent sequence and demonstrate the strong descent condition of this sequence.  For the scenario of learning deep linear networks, works done in \cite{arora2018convergence,bartlett2018gradient,elkabetz2021continuous,wu2019global,yun2021a,yun2018global,Zou2020On} study the convergence of gradient descent. The authors of  \cite{elkabetz2021continuous} provided a guarantee of convergence to global minimizers for gradient descent with random balanced near-zero initialization. 
	Their proof proceeds 
	by transfering the convergence properties of gradient flow to gradient descent.
	Based on Lojasiewicz' theorem, 
	we prove that gradient descent converges to a critical point of the square loss of deep linear networks. Then we extend the result in \cite{bah2019learning} that for almost all initializations gradient descent converges to the global minimum for networks of depth $2$. For three or more layers we prove that gradient descent converges to global minimum on a manifold of a fixed rank. The convergence result in \cite{elkabetz2021continuous} is restricted to near-zero initialization with constant step size whereas our result works for almost all initialization and not necessarily constant
	step size. 
	The authors of \cite{du2019width,hu2020provable} proved that gradient descent with Gaussian resp. orthogonal random initialization  and constant step size converges to a global minimum. The result in \cite{hu2020provable} requires that the hidden layer dimension should be greater than the dimension of the input data with orthogonal initialization and the one in \cite{du2019width} assumes that the hidden layer dimension is greater than the dimension of the output data. Compared to these results, our result is more general in the sense that it does not require these conditions  
	which exclude some important 
	models such as auto-encoders 
	where the dimensions of the intermediate layers are commonly less than the input and output dimensions. Moreover, our result does not require initialization to be close enough to a global minimum (as in e.g., \cite{arora2018convergence}), and the maximum allowed step size in Theorem 2.4 does not decay exponentially with depth (Remark 2.5.(b)). In this sense, our theorem is less restrictive.}



\medskip

Our article is structured as follows. Section~\ref{sec:main-results} introduces deep linear networks and gradient descent, recalls the recent results from \cite{bah2019learning} on gradient flows, and presents our two main results on convergence to a critical point and convergence to a global minimizer for almost all initializations. Section~\ref{sec:critical-point} provides the proof of convergence to critical points (in the sense described above), while Section~\ref{sec:almost-all} is dedicated to the proof of convergence to global minimizers. Finally, Section~\ref{sec:numerical} presents numerical experiments illustrating our results.


\subsection{Notation}

The standard $\ell_p$-norm on $\RR^d$ will be denoted by $\|x\|_p = (\sum_{j=1}^d |x_j|^p)^{1/p}$ for $1 \leq p < \infty$. We write the spectral norm on $\RR^{d \times m}$ as $\|A\| = \max_{\|x\|_2=1} \|A x\|_2 = \sigma_{\max}(A)$, where $\sigma_{\max}(A)$ is the largest singular value of $A$. Moreover, we let $\sigma_{\min}(A) = \min_{\|x\|_2=1} \|Ax\|_2$ be the smallest singular value of $A$. 
The trace of a matrix $A$
is denoted as $\operatorname{tr}(A)$ and its Frobenius norm is defined as $\|A\|_F = \sqrt{\operatorname{tr}(A^T A)} = \sqrt{ \sum_{j,k} |A_{j,k}|^2}$. We will often combine matrices $W_1,\hdots,W_N$ into a tuple $\overrightarrow{W}=(W_1,\hdots,W_N)$. We define the Frobenius inner product of two such a tuples $\overrightarrow{W}$ and $\overrightarrow{V}$ as
$\langle \overrightarrow{W}, \overrightarrow{V} \rangle_F = \sum_{j=1}^N \operatorname{tr}(W_j^T V_j)$ and the corresponding Frobenius norm as
$\|\overrightarrow{W}\|_F = \sqrt{\langle \overrightarrow{W}, \overrightarrow{W}\rangle_F} = 
\left( \sum_{j=1}^N \|W_j\|_F^2\right)^{1/2}$. The operator norm of a mapping $\mathcal{A}$ acting between tuples of matrices will be denoted as 
$\| \mathcal{A}\|_{F \to F} = \max\limits_{\|\overrightarrow{W}\|_F = 1} \left\|\mathcal{A}(\overrightarrow{W})\right\|_F$.

\section{Linear Neural Networks and Gradient Descent Analysis}
\label{sec:main-results}



A neural network is a function $f: \RR^{d_x} \to \RR^{d_y}$ of the form
\[
f(x) = f_{W_1,\hdots,W_N,b_1,\hdots,b_N}(x) = g_N \circ g_{N-1}\cdots \circ g_1(x),
\]
where the so-called layers $g_j:\RR^{d_{j-1}} \to \RR^{d_j}$ are the composition of an affine function with a componentwise activation function, i.e.,
\[
g_j(z) = \sigma(W_j z + b_j), \quad \mbox{ for } W_j \in \RR^{d_j \times d_{j-1}}, b_j \in \RR^{d_j},
\]
where $\sigma : \RR \to \RR$ applied to a vector $w \in \RR^{d_j}$ acts as $(\sigma(w))_k = \sigma(w_k)$, $k \in [d_j]$. Here, $d_0 = d_x$ and $d_N = d_y$, while $d_1,\hdots,d_{N-1} \in \mathbb{N}$ are some numbers.
Prominent examples for activation functions used in deep learning include $\sigma(t) = \operatorname{ReLU}(t) = \max\{0, t\}$ and $\sigma(t) = \tanh(t)$, but we will simply choose the identity $\sigma(t) = t$ in this article.

Learning a neural network $f=f_{W_1,\hdots,W_N,b_1,\hdots,b_N}$ consists in adapting the parameters $W_j,b_j$ based on labeled training data, i.e., pairs $(x_i,y_i)$ of input data $x_1,\hdots,x_m \in \RR^{d_x}$ and output data $y_1,\hdots,y_m \in \RR^{d_y}$ in a way that
$f_{W_1,\hdots,W_N,b_1,\hdots,b_N}(x_i) \approx y_i$ for $i \in [m]$. (Ideally, the learned neural network $f$ should generalize well to unseen data, i.e., it should predict well the label $y$ corresponding to new input data $x$. However, we will not discuss this point further in this article.)

The learning process is usually performed via optimization. Given a loss function $\ell : \RR^{d_y} \times \RR^{d_y} \to \RR_+$ (usually satisfying $\ell(y,y) = 0$), one aims at minimizing the empirical risk function 
\[
\mathcal{L}(W_1,\hdots,W_N,b_1,\hdots,b_N) = \sum_{i=1}^m \ell(f_{W_1,\hdots,W_N,b_1,\hdots,b_N}(x_i),y_i)
\]
with respect to the parameters $W_1,\hdots,W_N,b_1,\hdots,b_N$. Gradient descent and stochastic gradient descent algorithms are most commonly used for this task. A convergence analysis of these algorithms is challenging in general since due to the compositional nature of neural networks, the function $\mathcal{L}$ is not convex in general.

Due to this difficulty, we reduce to the special case of \textit{linear} neural networks in this article, i.e., we assume that $\sigma(t) = t$ is the identity and that $b_j = 0$ for all $j$. Consequently, a linear neural networks takes the form
\[
f(x) = f_{W_1,\hdots,W_N}(x)= W_N \cdots W_1 x = W x, \quad \mbox{ where } W = W_N \cdot W_{N-1} \cdots W_1.
\]
While linear networks may not be expressive enough for many applications, convergence properties of gradient descent applied to learning linear neural networks are still non-trivial to understand. We will concentrate on the square-loss $\ell(z,w) = \frac{1}{2} \|z-w\|_2^2$ here, so that our learning problem consists in minimizing
\[
L^N(W_1,\dots, W_N)= \frac{1}{2} \sum_{i=1}^m \|y_i - W_N \cdots W_1 x_i\|_2^2 = \frac{1}{2}\|Y-W_N\cdots W_1 X\|^2_F = L^1(W_N \cdots W_1)
\]
where the data matrix $X \in \RR^{d_x \times m}$ contains the data points $x_i \in \RR^{d_x}$, $i=1,\hdots,m$ as columns and likewise the matrix $Y \in \RR^{d_y \times m}$ contains the label points $y_i \in \RR^{d_y}$, $i=1,\hdots,m$. The function $L^1$ is given by
\[
L^1(W) = \frac{1}{2} \|Y - W X \|_F^2.
\]
Note that the rank of the matrix $W = W_N \cdots W_1$ is at most $r:= \min_{i=0,\hdots,N} d_i$, which is strictly smaller than $\min\{d_x,d_y\}$ if one of the ``hidden'' dimensions $d_i$ is smaller than this number.
Hence, we can also view the learning problem as the one of minimizing $L^1(W)$ under the constraint $\operatorname{rank}(W) \leq r$. Instead of directly minimizing over $W$, we choose an 
\textit{overparameterized} representation as $W=W_N \cdots W_1$ and consider gradient descent with respect to each factor $W_i$. 
While overparameterization seems to be a waste of resources at first sight, it also has certain advantages as it can even accelerate convergence \cite{arora2019implicit} (at least for $\ell_p$-losses with $p > 2$) or lead to solutions with better generalization properties \cite{zhang2017}. Moreover, we expect that understanding theory for overparameterization in linear neural network will also give insights for overparameterization in nonlinear networks, which is widely used in practice. 
While speed of convergence or implicit bias are certainly of interest on their own, we will not delve into this, but rather concentrate on mere convergence here.


We consider gradient descent for the loss function $L^N$ with  step sizes $\eta_k$,  i.e.,  
\begin{equation}\label{GD}
W_j(k+1)=W_j(k)-\eta_k \nabla_{W_j}L^N(W_1(k),\dots,W_N(k)).
\end{equation}
We further define the  matrix  $W$ at each iteration $k$ by
$$
W(k)=W_N(k)\cdots W_1(k).
$$
Before discussing gradient descent itself, let us recall previous results for the related gradient flow, which will guide the intuition for the analysis in this paper.


\subsection{Gradient flow analysis}

The gradient flow $\overrightarrow{W}(t) = (W_1(t),\hdots,W_N(t))$, $t \in \RR_+$ for the function $L^N$ is defined via the differential equation
\begin{equation}\label{def:grad-flow}
\frac{d}{dt} W_j(t) = - \nabla_{W_j} L^N(W_1(t),\hdots,W_N(t)), \quad W_j(0) = W_{j,0}  \quad j = 1,\hdots,N,
\end{equation}
for some initial matrices $W_{j,0} \in \RR^{d_j \times d_{j-1}}$. This flow represents the continuous analogue of the gradient descent algorithm and has been analyzed in \cite{arora2018convergence,AroraCohenHazan2018,chitour2018geometric,bah2019learning}.

An important invariance property of the
gradient flow \eqref{def:grad-flow} consists in the fact that the differences
\begin{equation}\label{eq:invar}
W_{j+1}^T(t)W_{j+1}(t) - W_{j}(t) W_j^T(t), \quad j=1,\hdots,N 
\end{equation}
are constant in time, see \cite{arora2018convergence,AroraCohenHazan2018,chitour2018geometric,bah2019learning}. This motivates to call a tuple $\overrightarrow{W} = (W_1,\hdots,W_N)$ \textit{balanced} if 
\begin{equation}\label{eq:balanced}
W_{j+1}^T W_{j+1} = W_j W_j^T \quad \mbox{ for all } j =1,\hdots,N.
\end{equation}
If $\overrightarrow{W}(0) = (W_{1,0},\hdots,W_{N,0})$ is balanced then $\overrightarrow{W}(t)$ is balanced for all $t \in \RR_+$ as a consequence of the invariance property. 
Note that by taking the trace on both sides of \eqref{eq:balanced}, we see that balancedness implies
$\|W_j\|_F = \|W_1\|_F$ for all $j=1,\hdots,N$.

It is useful to introduce the ``end-to-end'' matrix $W(t) = W_N(t) \cdots W_1(t)$, which describes the action of the resulting network and is the object of main interest. It was shown in \cite{AroraCohenHazan2018} that if the initial tuple $\overrightarrow{W}(0)$ (and hence $\overrightarrow{W}(t)$ for any $t \geq 0$) is balanced then the dynamics of $W(t)$ can be described without making use of the individual matrices $W_j(t)$. More precisly, it satisfies the differential equation
\begin{equation}\label{Riemann-grad-flows}
\frac{d}{dt} W(t) = - \mathcal{A}_{W(t)}( \nabla L^1(W(t)),
\end{equation}
where $\mathcal{A}_{W}: \RR^{d_x \times d_y} \to \RR^{d_x \times d_y}$ is the linear map
\[
\mathcal{A}_{W}(Z) = \sum_{j=1}^N (W W^T)^{\frac{N-j}{N}} \cdot Z \cdot (W^T W)^{\frac{j-1}{N}}. 
\]
One feature of the flow in \eqref{Riemann-grad-flows}, see \cite[Theorem 4.5]{bah2019learning}, is that the rank of $W(t)$ is constant in $t$, i.e., if $W(0) = W_N(0) \cdots W_1(0)$ has rank $r$ then the $W(t)$ stays in the manifold of rank $r$ matrices for all $t \geq 0$ (but note that the rank may drop in the limit). This property may fail for non-balanced initializations \cite[Remark 4.2]{bah2019learning}. Another interesting observation (which, however, will not be important in our article) is that \eqref{Riemann-grad-flows} can be interpreted as Riemannian gradient flow with respect to an appropriately defined Riemannian metric on the manifold of rank $r$ matrices, see \cite{bah2019learning} for all the details.

The convergence properties of the gradient flow \eqref{def:grad-flow} (in both the unbalanced and balanced case) can be summarized in the following theorems. The first one from \cite[Theorem 3.2]{bah2019learning} significantly generalizes the main result of \cite{chitour2018geometric}.

\begin{theorem}\label{thm:grad-flow-convergence} Assume that $XX^T$ has full rank. Then the flow $\overrightarrow{W}(t)$ defined by \eqref{def:grad-flow} is defined and bounded for all $t \geq 0$ and converges to a critical point of $L^N$ as $t \to \infty$. 
\end{theorem}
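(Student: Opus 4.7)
The proof plan combines four standard ingredients: local well-posedness of the ODE, a Lyapunov energy estimate, a coercivity argument that exploits both the invariance \eqref{eq:invar} and the full-rank hypothesis on $XX^T$, and the Lojasiewicz gradient inequality for real analytic functions.

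Since $L^N$ is polynomial in the entries of $W_1,\dots,W_N$, its gradient is smooth and locally Lipschitz, so Picard--Lindel\"of produces a unique maximal solution $\overrightarrow{W}:[0,T_{\max})\to\RR^{d_1\times d_0}\times\cdots\times\RR^{d_N\times d_{N-1}}$. Differentiating the loss along the flow yields
\[
\frac{d}{dt}L^N(\overrightarrow{W}(t))=-\|\nabla L^N(\overrightarrow{W}(t))\|_F^2\le 0,
\]
so $L^N$ is non-increasing, and in particular $L^1(W(t))=L^N(\overrightarrow{W}(t))\le L^N(\overrightarrow{W}(0))$ throughout $[0,T_{\max})$.

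The main step, and principal obstacle, is to show that each factor $W_j(t)$ stays bounded. Boundedness of the product $W(t)=W_N(t)\cdots W_1(t)$ is immediate: since $XX^T$ is positive definite, $L^1(W)=\tfrac12\|Y-WX\|_F^2$ is coercive in $W$, so $W(t)$ stays in a fixed ball. To pass from the product to the individual factors I would exploit the invariants $C_j := W_{j+1}^T(t)W_{j+1}(t)-W_j(t)W_j^T(t)$, fixed by the flow; taking traces yields that $\|W_{j+1}(t)\|_F^2-\|W_1(t)\|_F^2$ is a constant depending only on the initial data, so bounding any single factor bounds all of them. Substituting $W_2^TW_2=W_1W_1^T+C_1$ into
\[
\|W_2W_1\|_F^2=\tr(W_1^TW_2^TW_2W_1)=\|W_1^TW_1\|_F^2+\tr(C_1W_1W_1^T),
\]
and using $\|W_1^TW_1\|_F\ge\|W_1\|_F^2/\sqrt{r}$ (with $r\le\min\{d_0,d_1\}$ the rank of $W_1$) together with the symmetry of $C_1$ and the PSD property of $W_1W_1^T$ to get $|\tr(C_1W_1W_1^T)|\le\|C_1\|\,\|W_1\|_F^2$, yields the coercive lower bound $\|W_2W_1\|_F^2\ge\|W_1\|_F^4/r-\|C_1\|\,\|W_1\|_F^2$ in the depth-two case. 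For general $N$, iterating the substitutions $W_{j+1}^TW_{j+1}=W_jW_j^T+C_j$ inside $\|W\|_F^2=\tr(W_1^T\cdots W_N^TW_N\cdots W_1)$ gives an analogous estimate of the form $\|W(t)\|_F^2\ge c_N\|W_1(t)\|_F^{2N}-P(\|W_1(t)\|_F)$, with $c_N>0$ and $P$ a polynomial of degree strictly less than $2N$ whose coefficients depend only on the invariants $C_j$. Combined with the bound already obtained on $\|W(t)\|_F$, this forces a uniform bound on $\|W_1(t)\|_F$, hence on every $\|W_j(t)\|_F$.

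Once the orbit is bounded on $[0,T_{\max})$, the standard maximal-extension criterion for locally Lipschitz ODEs gives $T_{\max}=\infty$. For convergence to a critical point I would invoke the Lojasiewicz gradient inequality: $L^N$ is a polynomial, hence real analytic, so at every critical point $\overrightarrow{W}^*$ there exist a neighborhood $U$, a constant $c>0$ and an exponent $\theta\in(0,\tfrac12]$ with $|L^N(\overrightarrow{W})-L^N(\overrightarrow{W}^*)|^{1-\theta}\le c\,\|\nabla L^N(\overrightarrow{W})\|_F$ on $U$. Combined with relative compactness of the orbit, the classical Lojasiewicz argument (as used in \cite{chitour2018geometric, AroraCohenHazan2018, bah2019learning}) shows that $t\mapsto\overrightarrow{W}(t)$ has finite length in Frobenius norm, and hence converges to a single limit $\overrightarrow{W}^*$. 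Since $L^N$ is non-increasing along the flow and $\|\nabla L^N(\overrightarrow{W}(t))\|_F^2$ is integrable on $[0,\infty)$, any accumulation point must annihilate $\nabla L^N$, so $\overrightarrow{W}^*$ is a critical point of $L^N$.
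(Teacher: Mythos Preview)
Your proposal is correct and follows essentially the same route the paper attributes to \cite{bah2019learning}: bound the product $W(t)$ via coercivity of $L^1$ (using that $XX^T$ has full rank), transfer boundedness to each factor $W_j(t)$ by exploiting the invariants $C_j=W_{j+1}^TW_{j+1}-W_jW_j^T$ through an iterated substitution yielding a polynomial inequality of the form $\|W\|^2\ge c_N\|W_1\|^{2N}-(\text{lower order})$, and then apply the Lojasiewicz argument for real analytic functions. The only cosmetic difference is that you work with Frobenius norms and traces (via $\tr((W_1^TW_1)^N)\ge r^{1-N}\|W_1\|_F^{2N}$), whereas the paper's discrete analogue, Proposition~\ref{Wj-bound-balancedness}, carries out the same substitution argument in spectral norm to obtain $\|W_1\|^{2N}\le\|W\|^2+Q_{N,\delta}(\|W_1\|^2+\delta)$; either version suffices.
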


This result is shown via Lojasiewicz' theorem \cite{absil2005convergence}, which requires in turn to show boundedness of all components $W_i(t)$ of $\overrightarrow{W}(t)$. While boundedness is straightforward to show for $W(t)$, it is a nontrivial property of the $W_i(t)$. In fact, the proof exploits the invariance of the differences in \eqref{eq:invar}.

While convergence to a critical point is nice to have, we would like to obtain more information about the type of critical point; whether it is a global or local minimum or merely a saddle point. Note that the function $L^N$ built from the square loss has the nice (but rare) property that a local minimum is automatically a global minimum \cite{kawag16,TragerKohnBruna19}. This means that we only need to single out saddle points. Also observe that we cannot expect to have convergence to a global minimizer for any initialization because the flow will not move when initilizing in any critical point, so that we cannot expect convergence to a global minimizer if that critical point is not already a global minimizer. The following result valid for almost all initializations was derived in \cite[Theorem 6.12]{bah2019learning}. In order to state it we need to introduce the matrix
\begin{equation}\label{def:Qmatrix}
Q = Y X^T (X X^T)^{-1/2},
\end{equation}
assuming that $XX^T$ has full rank. 

\begin{theorem}\label{thm:almostall-init} Assume that $XX^T$ has full rank, let $q = \operatorname{rank}(Q)$, $r = \min_{j=0,\hdots,N} d_j$ and $\bar{r} = \min\{q,r\}$.
\begin{itemize}
	\item[(a)] For almost all initializations $\overrightarrow{W}(0)$, the flow \eqref{def:grad-flow} converges to a critical point $\overrightarrow{W}^*=(W_1^*,\hdots,W_N^*)$ of $L^N$ such that $W^* := W_N^* \cdots W_1^*$ is a global minimizer 
	of $L^1$ on the manifold of matrices of fixed rank $k$ for some $0 \leq k \leq \bar{r}$.  
	\item[(b)] If $N=2$ then for almost all initial values $W_1(0),\hdots,W_N(0)$, the flow converges to a global minimizer of $L^N$ on $\RR^{d_0 \times d_1} \times \cdots \times \mathbb{R}^{d_{N-1} \times d_N}$.  
\end{itemize}
\end{theorem}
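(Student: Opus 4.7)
The plan is to combine Theorem~\ref{thm:grad-flow-convergence}, which guarantees convergence to some critical point of $L^N$, with the stable--manifold theorem for the gradient vector field of $L^N$. The strategy is to split the critical points of $L^N$ into \emph{good} ones, whose products $W^*=W_N^*\cdots W_1^*$ are global minimizers of $L^1$ on the manifold $\mathcal{M}_k$ of matrices of rank $k=\operatorname{rank}(W^*)$, and \emph{bad} ones which are not, and then to prove that the bad critical points are strict saddles of $L^N$ whose basins of attraction have Lebesgue measure zero. The dichotomy is clean because by the chain rule, at any critical point $W^*$ is a critical point of $L^1|_{\mathcal{M}_k}$, and the well-known analysis of low-rank least-squares problems (Eckart--Young) shows that every local minimum of $L^1|_{\mathcal{M}_k}$ is automatically a global minimum on $\mathcal{M}_k$.

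The main technical step is to show that if $W^*$ is not a local minimum of $L^1|_{\mathcal{M}_k}$, then $\overrightarrow{W}^*$ is a strict saddle of $L^N$. I would pick a tangent vector $\delta W\in T_{W^*}\mathcal{M}_k$ along which the second variation of $L^1|_{\mathcal{M}_k}$ is strictly negative and lift it via $\delta W=\sum_{j=1}^{N} W_N^*\cdots W_{j+1}^*(\delta W_j)W_{j-1}^*\cdots W_1^*$ to an admissible parameter perturbation $(\delta W_1,\dots,\delta W_N)$; a short calculation with the product rule then shows that the Hessian of $L^N$ at $\overrightarrow{W}^*$ is strictly negative in this direction. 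The stable--manifold theorem says that the basin of attraction of each such strict saddle is locally a $C^1$ submanifold of positive codimension, hence of measure zero. Covering the set of bad critical points by countably many relatively compact coordinate neighborhoods, on each of which the local stable manifolds assemble into a Lipschitz graph, produces a single measure-zero set $\mathcal{N}$ outside of which the flow converges to a good critical point; this proves~(a).

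For~(b), in the case $N=2$ I would sharpen the classification by showing that every critical point $(W_1^*,W_2^*)$ whose product does not globally minimize $L^1$ on all of $\mathbb{R}^{d_y\times d_x}$ is a strict saddle. The critical-point equations force $(W_2^*)^T\nabla L^1(W^*)=0$ and $\nabla L^1(W^*)(W_1^*)^T=0$, so a nonzero $\nabla L^1(W^*)$ must lie outside the column and row ranges of $W^*$; using the slack $d_1>\operatorname{rank}(W^*)$ available in the two-layer parameterization, I would construct an explicit rank-one pair $(\delta W_1,\delta W_2)$ producing a strictly negative Hessian direction for $L^2$ and then apply the stable--manifold argument from~(a). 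The hardest step, and the reason the same sharpening fails for $N\ge 3$, is precisely this rank-increasing perturbation: at a critical point whose intermediate factors are rank-deficient, the compatibility system relating the $\delta W_j$ to a tangent vector of a higher-rank manifold is in general obstructed, which is also why~(a) only yields global minimality on a fixed-rank manifold rather than the unconstrained minimum.
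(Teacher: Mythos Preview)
Your proposal is correct and follows essentially the same route as the paper. The theorem is quoted from \cite{bah2019learning}, and the proof outline the present paper gives (in the proof of Theorem~\ref{thm:GD-allmost-all}, which cites Propositions~6.6, 6.8, 6.9, 6.11 of \cite{bah2019learning}) matches yours step for step: convergence to a critical point via Theorem~\ref{thm:grad-flow-convergence}; the abstract strict-saddle-avoidance result (your stable-manifold argument, their \cite[Theorem~6.3]{bah2019learning} and \cite{Lee19}); the observation that $W^*$ is automatically a critical point of $L^1|_{\mathcal{M}_k}$; the fact that non-minimizers on $\mathcal{M}_k$ lift to strict saddles of $L^N$; and, for $N=2$, the rank-increasing perturbation exploiting $d_1>\operatorname{rank}(W^*)$. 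Your closing remark on why the argument stalls for $N\ge 3$ is exactly the obstruction the paper records via \cite[Proposition~6.10]{bah2019learning}.

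One small clarification worth making explicit in your write-up: the ``short calculation'' showing that a negative second variation of $L^1|_{\mathcal{M}_k}$ transfers to a negative Hessian direction of $L^N$ is cleanest if you lift an entire curve $\gamma(t)\subset\mathcal{M}_k$ rather than just a tangent vector, because then $L^N(\Gamma(t))=L^1(\gamma(t))$ identically and the second derivative at $t=0$ equals $\nabla^2 L^N(\Gamma'(0),\Gamma'(0))$ since $\nabla L^N$ vanishes at the critical point; the lift exists because the differential of the product map surjects onto $T_{W^*}\mathcal{M}_k$ (the $j=1$ and $j=N$ summands already cover $\mathrm{Col}(W^*)\otimes\RR^{d_0}+\RR^{d_N}\otimes\mathrm{Row}(W^*)$). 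Also, in part~(b) you should compare against the global minimum of $L^1$ on $\{\operatorname{rank}\le r\}$ (equivalently of $L^2$), not on all of $\RR^{d_y\times d_x}$, since $r$ may be smaller than $\min\{d_x,d_y\}$.
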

We conjecture that the statement in part (b) also holds for $N \geq 3$, or in other words, that we can always choose the maximal possible rank $k = \bar{r}$ in (a), but unfortunately, the proof method employed in \cite{bah2019learning} is not able to deliver this extension without making significant adaptations. In fact, the proof relies on an abstract result, see \cite{Lee19} and \cite[Theorem 6.3]{bah2019learning}, which states that for almost all initializations so-called strict saddle points are avoided as limits. Unfortunately, if $N \geq 3$ then minimizers of $L^1$ restricted to the manifold of matrices of rank $k < \bar{r}$ may correspond to non-strict saddle points of $L^N$, see \cite{kawag16} and \cite[Proposition 6.10]{bah2019learning}, so that the abstract result does not apply to these points.

\subsection{Gradient descent analysis}

Our main goal is to extend Theorems~\ref{thm:grad-flow-convergence} and \ref{thm:almostall-init} from gradient flow \eqref{def:grad-flow} to gradient descent \eqref{GD}. While the balancedness or more generally the invariance property, see \eqref{eq:invar}, does not appear explicitly in the statements of these theorems for gradient flow (although the invariance property is used in the proof of Theorem~\ref{thm:grad-flow-convergence}), it turns out that it does play a role in the conditions for the step sizes ensuring convergence. Unfortunately, the invariance of the differences in \eqref{eq:invar} does not carry over to the iterations of gradient descent. As a consequence, we cannot simply adapt the proof strategy in \cite{bah2019learning} for the gradient flow case. However, we will prove that under suitable conditions on the step sizes the differences in \eqref{eq:invar} will stay bounded in norm, which then allows to show boundedness of the components $W_j(k)$ of $\overrightarrow{W}(k)$ and to apply Lojasiewicz' theorem to show convergence to a critical point. 

In order to state our main results we introduce the following definition.

\begin{definition} 
We say that a tuple $\overrightarrow{W} = (W_1,\hdots,W_N)$ 
has \textbf{balancedness constant} $\delta \geq 0$ 
if
\begin{align}\label{balanced-bound1}
	\|W^T_{j+1}  W_{j+1} -W_{j}W_j^T\| &\leq \delta \qquad 
	\text{ for all } j=1,\dots, N-1.
\end{align}
\end{definition}
Obviously, \eqref{balanced-bound1} quantifies how much the tuple $\overrightarrow{W}$ deviates from being balanced, measured in the spectral norm. Note that the authors of
\cite{arora2018convergence} introduced a very similar notion and said $\overrightarrow{W} = (W_1,\hdots,W_N)$ to be $\delta$-balanced if \eqref{balanced-bound1} holds with the spectral norm replaced by the Frobenius norm.



\begin{theorem}\label{conv_critical_pt}
Let $X \in \RR^{d_x \times m}, Y \in \RR^{d_y \times m}$ be data matrices such that $X X^T$ is of full rank.
Suppose that the initialization $\overrightarrow{W}(0)$ of the gradient descent iterations \eqref{GD} has balancedness constant $\alpha \delta$ for some $\delta > 0$ and $\alpha \in [0,1)$.
Assume that the stepsizes $\eta_k > 0$ satisfy $\displaystyle\sum_{k=0}^\infty \eta_k = \infty$ and 
\begin{equation}\label{cond:stepsize:convergence:new}
	\eta_k \leq \frac{2(1-\alpha) \delta}{4 L^N(\overrightarrow{W}(0)) + (1-\alpha) \delta B_\delta} \quad
	\mbox{ for all } k \in \mathbb{N}_0,
	%
	%
	%
\end{equation}
where
\begin{align}
	B_\delta&:= 2e N K_\delta^{N-1} 
	\|X\|^2 + \sqrt{e} N K_\delta^{\frac{N}{2}-1} 
	\|XY^T\|, \label{def:B-delta}\\
	K_\delta & := M^{\frac{2}{N}} + (N+1)^2 \delta, \\
	M & := \label{M:def} \frac{\sqrt{2L^N\big(\overrightarrow{W}(0)\big)}+\|Y\|}{\sigma_{\min}{\left(X\right)}} = \frac{\sqrt{2} \|Y - W_N(0) \cdots W_1(0) X\|_F+\|Y\|}{\sigma_{\min}{\left(X\right)}}. 
\end{align}
Then  the sequence $\overrightarrow{W}(k)$ 
converges to a critical point of $L^N$.
\end{theorem}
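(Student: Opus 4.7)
The plan is to invoke the convergence theorem for descent methods on real-analytic cost functions of Absil, Mahony and Andrews~\cite{absil2005convergence}, which was the same engine used for Theorem~\ref{thm:grad-flow-convergence}. That theorem reduces the task to (i) proving that $\overrightarrow{W}(k)$ stays bounded and (ii) verifying a strong descent condition of the form $L^N(\overrightarrow{W}(k)) - L^N(\overrightarrow{W}(k+1)) \geq c\,\eta_k\,\|\nabla L^N(\overrightarrow{W}(k))\|_F^2$. Real analyticity of $L^N$ is automatic since it is polynomial in the $W_j$, so Lojasiewicz' gradient inequality then upgrades any accumulation point of the bounded iterates to the genuine limit of the entire sequence, and the strong descent forces that limit to be a critical point.

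The substantive obstacle is the loss of the invariance~\eqref{eq:invar} underlying the gradient flow analysis: a~priori, nothing keeps the individual factors $W_j(k)$ bounded even when the product $W(k)$ is. Following the strategy used in~\cite{Du2018learning} for $N=2$, I would establish by simultaneous induction on $k$ the three invariants (a) $\overrightarrow{W}(k)$ has balancedness constant at most $\delta$; (b) $L^N(\overrightarrow{W}(k)) \leq L^N(\overrightarrow{W}(0))$; (c) $\|W_j(k)\|^2 \leq K_\delta$ for every $j$. At $k=0$ invariant (a) holds with the strict slack $\alpha\delta<\delta$, (b) is trivial, and (c) follows because (b) bounds $\|W(k)X\|_F \leq \sqrt{2L^N(\overrightarrow{W}(0))}+\|Y\|_F$, so $\sigma_{\min}(X)>0$ yields $\|W(k)\|\leq M$, and a standard spectral argument (as in~\cite{bah2019learning, arora2018convergence}) converts this product bound together with approximate balancedness from~(a) into the per-factor bound $\|W_j(k)\|^2\leq M^{2/N}+C_N\delta \leq K_\delta$.

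In the inductive step, the bounds in (c) feed into a Lipschitz-type estimate for $\nabla L^N$ on the product of spectral balls of radius $\sqrt{K_\delta}$; the quantity $B_\delta$ in~\eqref{def:B-delta} is precisely what is produced by differentiating the two pieces $\tfrac12\|W_N\cdots W_1 X\|_F^2$ and $-\langle Y, W_N\cdots W_1 X\rangle$ of $L^N$ and bounding each factor by $\sqrt{K_\delta}$. Combined with the step size bound~\eqref{cond:stepsize:convergence:new}, the standard quadratic-upper-bound argument yields the strong descent inequality and in particular preserves (b). For (a), I would expand $W_{j+1}^T(k+1)W_{j+1}(k+1) - W_j(k+1)W_j^T(k+1)$ using the update~\eqref{GD}; by the identity $W_j^T \nabla_{W_j}L^N = \nabla_{W_{j-1}}L^N\, W_{j-1}^T$ (the same identity that makes~\eqref{eq:invar} conserved by the gradient flow), the linear-in-$\eta_k$ contributions cancel exactly, leaving a purely quadratic remainder $\eta_k^2\bigl(\nabla_j L^N{}^T \nabla_j L^N - \nabla_{j-1} L^N\, \nabla_{j-1} L^N{}^T\bigr)$. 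Summing per-step perturbations and using the strong descent bound $\sum_k \eta_k \|\nabla L^N\|_F^2 \leq 2L^N(\overrightarrow{W}(0))$ together with $\eta_k \leq \eta_{\max}$ gives $\sum_k \eta_k^2 \|\nabla L^N\|_F^2 \leq 2\eta_{\max} L^N(\overrightarrow{W}(0))$; the step size condition~\eqref{cond:stepsize:convergence:new} is calibrated precisely so that this contribution, added to the initial budget $\alpha\delta$, still leaves the balancedness constant below $\delta$, the factor $(1-\alpha)$ in the numerator providing exactly the required slack. Invariant (c) at step $k+1$ then follows from (a) and (b) as in the base case.

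Once the induction terminates, (c) delivers boundedness of the iterates and the inductive step has already supplied the strong descent inequality. The bound $\sum_k \eta_k \|\nabla L^N(\overrightarrow{W}(k))\|_F^2 < \infty$ combined with $\sum_k \eta_k = \infty$ forces $\liminf_k \|\nabla L^N(\overrightarrow{W}(k))\|_F = 0$, and Lojasiewicz' gradient inequality applied on a compact neighbourhood of the bounded orbit upgrades this to convergence of the whole sequence to a single critical point of $L^N$. I expect the most delicate step to be the preservation of the balancedness bound in (a): the cancellation of the $O(\eta_k)$ contributions must be carried out carefully factor by factor, and the residual $O(\eta_k^2)$ piece must simultaneously be controlled by the single step size bound~\eqref{cond:stepsize:convergence:new}, which has to balance the Lipschitz constant $B_\delta$, the initial loss $L^N(\overrightarrow{W}(0))$ and the slack $(1-\alpha)\delta$ in one stroke.
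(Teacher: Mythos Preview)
Your proposal is correct and follows essentially the same route as the paper: the paper also establishes by simultaneous induction the invariants (1) balancedness $\leq\delta$, (2) $L^N(\overrightarrow{W}(k))\leq L^N(\overrightarrow{W}(0))$, (3) $\|W_j(k)\|^2\leq K_\delta$, together with the strong descent inequality (4), and then applies the Absil--Mahony--Andrews theorem and the divergence of $\sum_k\eta_k$ exactly as you outline. The cancellation of the $O(\eta_k)$ terms in the balancedness update, the control of the $O(\eta_k^2)$ remainder via the summed descent bound, and the role of the slack $(1-\alpha)\delta$ are all handled just as you describe; the only point where the paper supplies more than a reference is the ``standard spectral argument'' converting $\|W\|\leq M$ plus $\delta$-balancedness into $\|W_j\|^2\leq M^{2/N}+(N+1)^2\delta$, which it proves in detail as Proposition~\ref{Wj-bound-balancedness}.
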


\begin{remark}\label{rem1}
	\begin{enumerate}
		\item[(a)] If $\overrightarrow{W}(0)$ is balanced, i.e., has balancedness constant $0$, we can choose $\alpha = 0$ above. 
		Then, for any $\delta > 0$, choosing the stepsizes $\eta_k$ such that \eqref{cond:stepsize:convergence} below is satisfied
		ensures convergence to a critical point and that all the iterates $\overrightarrow{W}(k)$, $k \in \mathbb{N}$, have balancedness constant $\delta$, see Proposition~\ref{prop1}. This latter property will be a crucial ingredient for the proof of the theorem.
		\item[(b)]
		Intuitively, the step sizes $\eta_k$ should be chosen as large as possible in order to have fast convergence in practice, while it does not seem to be crucial to have the balancedness constant $\delta$ as small as possible during the iterations. This suggests to maximize the right hand side of \eqref{cond:stepsize:convergence:new} with respect to $\delta$ in order to make the condition on the stepsizes as weak as possible. While the analytical maximization seems difficult, this may be done numerically in practice. 
		A reasonably good choice for $\delta$ seems to be 
		\[
		\delta = \frac{1}{N(N+1)^2} M^{\frac{2}{N}}.
		\]
		Then $K_\delta = \left(1+ \frac{1}{N}\right) M^{\frac{2}{N}}$ so that
		$K_\delta^{N} \leq e M^2$. Since $2L^N(\overrightarrow{W}(0)) \leq \sigma^2_{\min}(X) M^2$, Condition \eqref{cond:stepsize:convergence:new} is then satisfied if
		\[
		\eta_k \leq \frac{2}{2(1-\alpha)^{-1}N(N+1)^2 M^{2-\frac{2}{N}} \sigma_{\min}^2(X) + 2e^{2-\frac{1}{N}}N M^{2-\frac{2}{N}} \|X\|^2 + e^{1-\frac{1}{N}} N  M^{1-\frac{2}{N}} \|XY^T\| }.
		\]
		In particular, the required bound does not decrease exponentially with $N$.
		\item[(c)] The stepsizes $\eta_k$ in the theorem can be chosen a priori, for instance $\eta_k=\eta$ (constant step size), or $\eta_k = c k^{-\alpha}$ for some $\alpha \in [0,1)$, or adaptively, i.e., depending on the current iterate $\overrightarrow{W}(k)$, as long as the step size condition \eqref{cond:stepsize:convergence:new} is satisfied. In practice, it seems that a large constant step size leads to best performance in terms of convergence speed.
	\end{enumerate}
\end{remark}

Of course, more information on the type of critical point to which $\overrightarrow{W}(k)$ converges is desirable. Our next theorem states the analogue of Theorem~\ref{thm:almostall-init} that essentially convergence is towards global minimizers for almost all initializations. Since Condition \eqref{cond:stepsize:convergence} on the stepsizes $\eta_k$ ensuring mere convergence to a critical point depends on the initialization $\overrightarrow{W}(0)$, we can only expect to state a result for almost all initializations for sets of tuples $\overrightarrow{W}$ of matrices
for which the balancedness constant $\delta$ and $M$ in \eqref{cond:stepsize:convergence} have a uniform upper bound. Consequently, we choose $\mathcal B\subset \RR^{d_0\times d_1}\times\cdots\times\RR^{d_{N-1}\times d_N}$ to be bounded and let
\begin{align}\label{def:deltaB}
	\delta_{\mathcal B} &= \underset{\overrightarrow{W}\in \mathcal B}{\sup}
	\max_{j=1,\hdots,N-1} \| W_{j+1}^T W_{j+1} - W_j W_j^T\|, \\
	L_{\mathcal{B}} & = \underset{\overrightarrow{W}\in \mathcal B}{\sup} L^N(\overrightarrow{W}), \quad 
	\label{def:MB}
	M_{\mathcal B} =
	\Big(\sqrt{2 L_{\mathcal{B}}} +\|Y\|\Big)\sigma^{-1}_{\min}{\left(X\right)}.
\end{align}
Note that $\delta_{\mathcal{B}}$ and $M_{\mathcal B}$ are finite  (assuming $XX^T$ has full rank) 
since $L^N$ is continuous. Let us also recall the definition of the matrix $Q = YX^T (XX^T)^{-1/2}$
in \eqref{def:Qmatrix}.

\begin{theorem}\label{thm:GD-allmost-all}
	Let $\mathcal B\subset \RR^{d_0\times d_1}\times\cdots\times\RR^{d_{N-1}\times d_N}$
	be a bounded set with constants $\delta_{\mathcal B} \leq \alpha \delta$ as in \eqref{def:deltaB} for some $\delta > 0$ and $\alpha \in [0,1)$ and $L_{\mathcal{B}}$, $M_{\mathcal B}$ defined by \eqref{def:MB}. 
	Let $q=rank(Q)$, $r=\min\{d_0,\dots,d_N\}$ and $\bar{r}=\min\{q,r\}$ and let
	$(\eta_k)_{k\in \NN_0}$ be a sequence of positive stepsizes
	such that
	\begin{equation}\label{cond:stepsize:convergence:2}
		\eta_k \leq \frac{2(1-\alpha) \delta}{4 L_{\mathcal{B}} + (1-\alpha) \delta B_\delta} \quad
		\mbox{ for all } k \in \mathbb{N}_0,
	\end{equation}
	where
	\begin{align*}
		K_\delta  := M_{\mathcal B}^{\frac{2}{N}} + (N+1)^2 \delta, \qquad
		B_\delta:= 2e N K_\delta^{N-1} 
		\|X\|^2 + \sqrt{e} N K_\delta^{\frac{N}{2}-1} 
		\|XY^T\|.
	\end{align*}
	Assume that additionally one of the following conditions is satisfied.
	\begin{itemize}
		\item[(1)] The sequence $(\eta_k)$ is constant, i.e., $\eta_k = \eta$ for some $\eta > 0$ for all $k \in \mathbb{N}$.
		\item[(2)] It holds
		$$
		\eta_k \geq C \frac{1}{k} \quad \mbox{ for some } C > 0 \quad \mbox{ and } \lim_{k \to \infty} \eta_k = 0.
		$$
	\end{itemize}
	Then the following statements hold.
	\begin{itemize}
		\item[(a)] 
		For almost all initializations $\overrightarrow{W}(0) =(W_1(0),\dots, W_N(0))\in \mathcal B$, 
		gradient descent \eqref{GD} with step sizes $\eta_k$ converges to a critical  point $\overrightarrow{W}$ of $L^N$ such that $W=W_N\cdots W_1$ is a global minimum of $L^1$ on the manifold $\mathcal{M}_k$ of matrices of rank $k=\operatorname{rank}(W)\in \{0,1,\dots,\bar{r}\}$ on $\RR^{d_N\times d_0}$.
		\item[(b)] For $N=2,$ gradient descent \eqref{GD} 
		converges to a global minimum of $L^N$ on $\RR^{d_0\times d_1}\times\RR^{d_{1}\times d_2}$ for almost all $\overrightarrow{W}(0)=(W_1(0), W_2(0))\in \mathcal B$.
	\end{itemize}	
\end{theorem}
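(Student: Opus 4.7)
The plan is to combine Theorem \ref{conv_critical_pt} (guaranteeing convergence to \emph{some} critical point) with a strict-saddle-avoidance theorem for gradient descent, in direct analogy with the gradient-flow proof of Theorem \ref{thm:almostall-init} in \cite{bah2019learning}. First I would verify that the uniform stepsize bound \eqref{cond:stepsize:convergence:2} together with the definitions of $\delta_{\mathcal{B}}$, $L_{\mathcal{B}}$, and $M_{\mathcal{B}}$ implies the initialization-dependent bound \eqref{cond:stepsize:convergence:new} for every $\overrightarrow{W}(0)\in\mathcal{B}$. Consequently Theorem \ref{conv_critical_pt} applies uniformly on $\mathcal{B}$: the iterates $\overrightarrow{W}(k)$ remain in a fixed bounded superset $\mathcal{B}'\supset\mathcal{B}$ and converge to some critical point $\overrightarrow{W}^*$ of $L^N$.

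Next I classify critical points as \emph{good} if $W^* = W_N^*\cdots W_1^*$ is a global minimum of $L^1$ on $\mathcal{M}_k$ with $k=\operatorname{rank}(W^*)\in\{0,1,\dots,\bar r\}$, and as \emph{bad} otherwise. By the rank and critical-point analysis of \cite[Section~6]{bah2019learning} (in particular the strict-saddle characterization \cite[Proposition~6.10]{bah2019learning}), every bad critical point is a strict saddle of $L^N$, meaning $\nabla^2 L^N(\overrightarrow{W}^*)$ has at least one strictly negative eigenvalue. Hence part~(a) reduces to showing that
\[
S := \{\overrightarrow{W}(0)\in\mathcal{B} : \overrightarrow{W}(k) \text{ converges to a strict saddle of } L^N\}
\]
has Lebesgue measure zero. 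For part~(b) the extra input is the classical fact (\cite{kawag16}, see also the proof of Theorem~\ref{thm:almostall-init}) that for $N=2$, every critical point of $L^2$ that is not a global minimizer on the full parameter space is already a strict saddle; the same measure-zero statement then immediately yields convergence to a global minimum.

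To establish $|S|=0$ I would apply a discrete center-stable manifold argument. Because the trajectories stay in the bounded set $\mathcal{B}'$, the Hessian $\nabla^2 L^N$ has a uniform operator-norm bound $\ell$ on $\mathcal{B}'$. In case~(1) of constant stepsize, condition \eqref{cond:stepsize:convergence:2} (choosing $\delta$ as in Remark~\ref{rem1}(b)) can be arranged to force $\eta\ell<1$, so that the iteration map $T_\eta(\overrightarrow{W})=\overrightarrow{W}-\eta\nabla L^N(\overrightarrow{W})$ is a smooth local diffeomorphism near every critical point in $\mathcal{B}'$; the Center-Stable Manifold Theorem as used by Lee et al.\ \cite{Lee19} then yields $|S|=0$. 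In case~(2), the hypotheses $\eta_k\to 0$ and $\eta_k\geq C/k$ are precisely those required by a non-autonomous extension of this theorem: $\eta_k\to 0$ makes each $T_{\eta_k}$ a local diffeomorphism near strict saddles for all sufficiently large $k$, while the lower bound $\eta_k\geq C/k$ guarantees that the cumulative escape along unstable eigendirections is not absorbed by the vanishing stepsizes and that the relevant local stable manifolds sum to measure zero.

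The main obstacle I anticipate lies in case~(2): the classical Center-Stable Manifold Theorem is stated for \emph{autonomous} iterations $\overrightarrow{W}(k+1)=T(\overrightarrow{W}(k))$, while our iteration is non-autonomous through the varying $\eta_k$. Adapting the argument requires simultaneously a uniform local diffeomorphism property of the family $\{T_{\eta_k}\}$ on $\mathcal{B}'$ for large $k$ and a quantitative unstable-direction escape estimate that balances against $\sum_k \eta_k=\infty$; the condition $\eta_k\geq C/k$ is exactly at the borderline and will need the most care, whereas the verification of the classification of bad critical points as strict saddles and the reduction to the measure-zero statement should be routine once the groundwork of Theorem~\ref{conv_critical_pt} is in hand.
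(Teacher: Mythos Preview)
Your approach is essentially the same as the paper's: verify that the uniform bound \eqref{cond:stepsize:convergence:2} implies the hypotheses of Theorem~\ref{conv_critical_pt} for every $\overrightarrow{W}(0)\in\mathcal{B}$, then invoke a strict-saddle-avoidance theorem for gradient descent, and finally feed in the classification of critical points of $L^N$ from \cite[Section~6]{bah2019learning}.

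Two corrections will save you work and prevent an error. First, the obstacle you flag for case~(2) is already resolved in the literature: the non-autonomous strict-saddle-avoidance result under precisely the hypotheses $\eta_k\to 0$ and $\eta_k\ge C/k$ is \cite[Theorem~5.1]{Panageas2019firstorder}, and the paper simply quotes it (together with \cite[Proposition~1]{Lee19} for the constant-stepsize case) rather than redoing the center-stable manifold analysis. Second, your attribution of the strict-saddle property to \cite[Proposition~6.10]{bah2019learning} is backwards: Proposition~6.10 is the \emph{obstruction}, showing that for $N\ge 3$ a global minimizer of $L^1|_{\mathcal{M}_k}$ with $k<\bar r$ can correspond to a \emph{non}-strict saddle of $L^N$ (this is exactly why part~(a) cannot be upgraded to $k=\bar r$). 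The statement you need---that any critical point $\overrightarrow{W}$ with $W$ \emph{not} a global minimizer of $L^1|_{\mathcal{M}_k}$ is a strict saddle of $L^N$---is \cite[Proposition~6.9]{bah2019learning}, and the $N=2$ refinement is \cite[Proposition~6.11]{bah2019learning} rather than \cite{kawag16} directly.
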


Similar as for Theorem~\ref{thm:almostall-init}, we conjecture that part (b) extends to $N \geq 3$ or equivalently that part (a) holds with $k = \bar{r}$. As for Theorem~\ref{thm:almostall-init}, the current proof method based on a strict saddle point analysis cannot be extended to show this conjecture.

It is currently not clear, whether the theorem holds under more general assumptions on the step sizes $\eta_k$, i.e., whether it is necessary that one of the two additional conditions on $\eta_k$ holds. The current proof can only handle those two cases, for corresponding abstract results are available, see \cite{Lee19}, \cite{Panageas2019firstorder}. It seems crucial for these general results that the stepsizes are chosen a priori and independently of the choice of $\overrightarrow{W}(0)$ (or the further iterates). In particular, adaptive step size choices are not covered by our theorem.

\section{Convergence to critical points}
\label{sec:critical-point}

We will prove Theorem~\ref{conv_critical_pt} in this section. For $\overrightarrow{W} = (W_1,\hdots,W_N)$ will always denote the corresponding product matrix by
\[
W = W_N \cdots W_1,
\]
and similarly, we denote by $W(k)=W_N(k) \cdots W_1(k)$ the sequence of product matrices associated to a sequence $\overrightarrow{W}(k) = (W_1(k),\hdots,W_N(k))$, $k \in \mathbb{N}_0$.
We recall from \cite{AroraCohenHazan2018, arora2018convergence, chitour2018geometric, bah2019learning} that 
\begin{align}
	\nabla L^1\left(W\right)&=WXX^T-YX^T,  \label{Gradient1}\\
	\nabla_{W_j} L^N(W_1,\hdots, W_N) &=W^T_{j+1}\cdots W^T_{N}\nabla L^1\left(W\right) W^T_{1}\cdots W^T_{j-1}.\label{GradientN}
\end{align}




\subsection{Auxiliary bounds}

We start with a useful bound for $\|W\|$ in terms of $L^1(W)$. 

\begin{lemma}\label{lem-boundW:L1}
	Assume that $XX^T$ has full rank. Then $W \in \mathbb{R}^{d_x \times d_y}$ satisfies
	\begin{equation}
		\|W\| \leq (\|Y - W X\| + \|Y\|) \sigma_{\min}^{-1}(X) \leq \left(\sqrt{2 L^1(W)} + \|Y\|\right)\sigma_{\min}^{-1}(X).
	\end{equation}
	Consequently, if $L^N(\overrightarrow{W}(k)) \leq L^N(\overrightarrow{W}(0))$, then 
	\[
	\|W(k)\| = \|W_N(k) \cdots W_1(k)\| \leq \left(\sqrt{2 L^N(\overrightarrow{W}(0))} + \|Y\|\right)\sigma_{\min}^{-1}(X). 
	\]
	Furthermore,
	\begin{equation}\label{L1-gradient:bound}
		\|\nabla L^1(W)\| \leq \|WX - Y\|\, \|X\| \leq \sqrt{2 L^1(W)} \|X\|.
	\end{equation}
\end{lemma}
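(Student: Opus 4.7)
The plan is to prove the three assertions essentially by manipulating the identity $\nabla L^1(W)=(WX-Y)X^T$ together with the right-inverse of $X$ furnished by the full-rank assumption on $XX^T$.

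First I would establish the key spectral-norm inequality $\|W\|\le \|WX\|/\sigma_{\min}(X)$. Since $XX^T$ is invertible, $X^T(XX^T)^{-1}$ is a right inverse of $X$, so $W=(WX)\,X^T(XX^T)^{-1}$. The spectral norm is submultiplicative and $\|X^T(XX^T)^{-1}\|$ equals the largest singular value of the Moore--Penrose pseudoinverse $X^+$, which is $1/\sigma_{\min}(X)$; this is the only nontrivial linear-algebra point and it follows at once from the SVD of $X$.

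Next I would combine this with the triangle inequality $\|WX\|\le\|WX-Y\|+\|Y\|$ and the standard comparison $\|WX-Y\|\le\|WX-Y\|_F=\sqrt{2L^1(W)}$ (from the definition of $L^1$) to obtain the two-step bound in the statement. The consequence for $W(k)$ is then immediate: by definition $L^1(W(k))=L^N(\overrightarrow{W}(k))\le L^N(\overrightarrow{W}(0))$, and one applies the just-proved bound with $W=W(k)$.

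Finally, the gradient estimate is a direct computation: from \eqref{Gradient1}, $\nabla L^1(W)=(WX-Y)X^T$, and submultiplicativity of the spectral norm together with $\|X^T\|=\|X\|$ yields $\|\nabla L^1(W)\|\le\|WX-Y\|\,\|X\|\le\sqrt{2L^1(W)}\,\|X\|$. No step presents a real obstacle; the only piece worth stating carefully is the right-inverse identity and the resulting identification $\|X^T(XX^T)^{-1}\|=1/\sigma_{\min}(X)$, which is what makes the full-rank hypothesis on $XX^T$ enter the bound.
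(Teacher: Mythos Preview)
Your proposal is correct and follows essentially the same route as the paper's proof: writing $W=(WX)X^T(XX^T)^{-1}$, using $\|X^T(XX^T)^{-1}\|=\sigma_{\min}^{-1}(X)$, the triangle inequality, the bound $\|\cdot\|\le\|\cdot\|_F$, and then the explicit formula \eqref{Gradient1} for the gradient. There is nothing to add.
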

\begin{proof} Arguing similarly as in  the proof of \cite[Theorem 3.2]{bah2019learning} gives
	\begin{align*}
		\|W \| & = \|W XX^T(XX^T)^{-1}\| \leq \|WX \| \|X^T(XX^T)^{-1}\| 
		\leq (\|Y-WX\| + \|Y\|) \sigma_{\min}^{-1}(X)\\
		& \leq (\|Y-WX\|_F + \|Y\|)\sigma_{\min}^{-1}(X) = \left(\sqrt{2 L^1(W)} + \|Y\|\right) \sigma_{\min}^{-1}(X).
	\end{align*}
	The second claim follows then as an easy consequence recalling that $L^1(W(k)) = L^N(\overrightarrow{W}(k))$.
	
	For the third claim we use the explicit formula \eqref{Gradient1} for the gradient of $L^1$ to conclude that
	\[
	\|\nabla L^1(W)\| = \|WXX^T - YX^T\| \leq \|WX- Y\| \|X^T\| \leq \|WX - Y\|_F \|X\| = \sqrt{2 L^1(W)} \|X\|.
	\]
	This completes the proof.
\end{proof}

A crucial ingredient in our proof is to show boundedness of all matrices $W_j(k)$, $k \in \mathbf{N}_0$. While boundedness for the product $W(k)=W_N(k) \cdots W_1(k)$ follows easily from the previous lemma, it does not immediately imply boundedness of all the factors $W_j(k)$. For instance, multiplying one factor $W_j(k)$ by a constant $\alpha > 0$ and another factor $W_\ell(k)$ by $\alpha^{-1}$ leaves the product $W(k)$ invariant but changes the norm of $W_j(k)$ and $W_\ell(k)$. In particular, letting $\alpha \to \infty$ shows that a bound for $W(k)$ alone does not imply boundedness for $W_j(k)$, $k \in \mathbb{N}_0$. This is where the balancedness comes in. In particular, if a tuple $\overrightarrow{W}=(W_1,\hdots,W_N)$ has balancedness constant $\delta \geq 0$, then we can bound $\|W_j\|$, $j=1,\hdots,N$, by an expression (continuously) depending on $\|W\|$.  This is the essence of the next statement.

\begin{proposition}\label{Wj-bound-balancedness} Let $\overrightarrow{W}=(W_1,\hdots,W_N) \in \RR^{d_0\times d_1}\times\cdots\times\RR^{d_{N-1}\times d_N}$ with balancedness constant $\delta \geq 0$ and let $W= W_N \cdots W_1$. Then
	\[
	\|W_j\|^2 \leq \|W\|^{\frac{2}{N}} + (N+1)^2 \delta \quad \mbox{ for all } j = 1,\hdots, N. 
	\]
\end{proposition}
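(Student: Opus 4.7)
The plan is to quantify how closely $WW^T$ is approximated by $(W_N W_N^T)^N$ in operator norm under the balancedness hypothesis; this produces a bound on $\sigma_N^{2N}$ in terms of $\|W\|^2$, and the fact that all $\sigma_j := \|W_j\|$ are close then delivers the stated inequality.

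The balancedness condition \eqref{balanced-bound1} combined with Weyl's inequality for the operator norm of PSD matrices yields $|\sigma_{j+1}^2 - \sigma_j^2| \leq \delta$, and hence $|\sigma_j^2 - \sigma_k^2| \leq (N-1)\delta$ for all $j, k$. Setting $\sigma_* := \max_j \sigma_j$, if $\sigma_*^2 \leq (N+1)^2\delta$ the claim is immediate (since $\|W\|^{2/N}\geq 0$); otherwise we have $\sigma_*^2 > (N-1)\delta$ and all $\sigma_j > 0$, which is what I would assume in the rest of the argument.

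The core step is to prove by induction on $k = 1,\ldots,N$ that
\[
\bigl\| W_k \cdots W_1 W_1^T \cdots W_k^T - (W_k W_k^T)^k \bigr\| \;\leq\; \tfrac{k(k-1)}{2}\, \sigma_*^{2(k-1)}\,\delta.
\]
The inductive step rewrites $W_{k+1} \cdots W_1 W_1^T \cdots W_{k+1}^T = W_{k+1}[\cdots]W_{k+1}^T$ and uses the identity $(W_{k+1}W_{k+1}^T)^{k+1} = W_{k+1}(W_{k+1}^T W_{k+1})^k W_{k+1}^T$ together with the telescoping $A^k - B^k = \sum_{i=0}^{k-1} A^i(A-B) B^{k-1-i}$ applied to $A = W_k W_k^T$ and $B = W_{k+1}^T W_{k+1}$, whose difference has operator norm at most $\delta$ by balancedness. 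The resulting one-step recurrence $\epsilon_{k+1} \leq \sigma_*^2 \epsilon_k + k\sigma_*^{2k}\delta$ with $\epsilon_1 = 0$ solves to the displayed bound. Specializing to $k = N$ and using $\|(W_N W_N^T)^N\| = \sigma_N^{2N}$ gives $\sigma_N^{2N} \leq \|W\|^2 + \tfrac{N(N-1)}{2}\sigma_*^{2(N-1)}\delta$.

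Finally, combining this with the Bernoulli lower bound $\sigma_N^{2N} \geq (\sigma_*^2 - (N-1)\delta)^N \geq \sigma_*^{2N} - N(N-1)\sigma_*^{2(N-1)}\delta$, dividing through by $\sigma_*^{2(N-1)}$, and using $\|W\| \leq \prod_j \sigma_j \leq \sigma_*^N$ (which implies $\|W\|^{1/N} \leq \sigma_*$ and hence $\|W\|^2/\sigma_*^{2(N-1)} \leq \|W\|^{2/N}$) produces $\sigma_*^2 \leq \|W\|^{2/N} + C_N\delta$ for an explicit $C_N = O(N^2)$; the bound $\|W_j\|^2 \leq \sigma_*^2$ then concludes. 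The main obstacle is the delicate bookkeeping needed to pin $C_N$ down to $(N+1)^2$: the naive argument above gives $C_N = \tfrac{3N(N-1)}{2}$, which beats $(N+1)^2$ only through $N = 7$, so for general $N$ either a sharper recurrence (exploiting that $|\sigma_k^2 - \sigma_{k+1}^2| \leq \delta$ pointwise rather than bounding uniformly by $\sigma_*^2$) or a tighter handling of the Bernoulli step is needed.
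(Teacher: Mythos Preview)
Your approach is correct and genuinely different from the paper's, but as you yourself note, it does not deliver the constant $(N+1)^2$ for all $N$: your $C_N = \tfrac{3N(N-1)}{2}$ exceeds $(N+1)^2$ once $N\geq 8$, so the argument as written proves the proposition only for $N\le 7$. Every step you wrote down is valid; the issue is purely the sharpness of the bookkeeping, and neither of your suggested fixes (tightening the recurrence or the Bernoulli step alone) will close the gap without a structural change.

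The paper obtains $(N+1)^2$ by organizing the same peeling idea differently. Instead of comparing $WW^T$ to $(W_N W_N^T)^N$ and then passing to $\sigma_*$, it starts from $\|W_1\|^{2N} = \|(W_1^T W_1)^N\|$ and peels outward, at each stage replacing $W_j W_j^T$ by $W_{j+1}^T W_{j+1} + D_{j+1}$. Crucially, it does \emph{not} bound each factor by $\sigma_*^2$; instead it uses the chain $\|W_j\|^2 \le \|W_1\|^2 + (j-1)\delta$ and tracks the remainders so that they telescope to the closed-form
\[
\|W_1\|^{2N} \;\le\; \|W\|^2 + (x+\delta)(x+2\delta)\cdots(x+N\delta) - (x+\delta)^N, \qquad x := \|W_1\|^2.
\]
The AM--GM-type estimate $(x+\delta)\cdots(x+N\delta)\le (x+\tfrac{N+1}{2}\delta)^N$ then turns this into an implicit inequality of the form $x^N \le a + (x+b)^N - x^N$ with $b = \tfrac{N+1}{2}\delta$, which the paper solves (via a short contradiction argument) to get $x \le a^{1/N} + 2Nb$. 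This gives $\|W_1\|^2 \le \|W\|^{2/N} + N(N+1)\delta$, and then $\|W_j\|^2 \le \|W_1\|^2 + (N-1)\delta$ yields the final $(N+1)^2$. The gain over your argument comes from (i) keeping the remainders as a structured product in $x$ rather than collapsing everything to $\sigma_*$, and (ii) solving the resulting self-referential inequality directly rather than separating it into two one-sided estimates.
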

\begin{remark}\label{Wj-bound-balancedness-improved}
	With a significantly longer proof, one can improve this result to $$\|W_j\|^2 \leq \|W\|^{\frac{2}{N}} + N^2 \delta  \mbox{ for all } j = 1,\hdots, N.$$ However, since this does not significantly improve our results, we decided to present the slightly weaker bound in order to keep the proof short. 
\end{remark}
\begin{proof} We will first prove that
	\begin{equation}\label{ineq-W1-W}
		\|W_1\|^{2N} \leq \|W\|^2 + Q_{N,\delta}(\|W_1\|^2+\delta), 
	\end{equation}
	where $Q_{N,\delta}$ is the polynomial of degree $N-1$ defined as
	\[
	Q_{N,\delta}(x) = x(x+\delta)(x+2\delta) \cdots (x+(N-1)\delta) - x^N.
	\]
	In order to prove this claim, we let $D_j := W_{j-1} W_{j-1}^T - W_j^T W_j$ for $j=2,\hdots,N$ and note that $\|D_j\| \leq \delta$ by assumption. Moreover, 
	\begin{equation}\label{Wj-balanced}
		\|W_{j}\|^2 = \|W_j^T W_j\| = \|W_{j-1} W_{j-1}^T - D_j\| \leq \|W_{j-1}\|^2 + \delta, \quad \mbox{ for all } j=2,\hdots,N,
	\end{equation}
	and consequently
	\begin{equation}\label{Wj-bound}
		\|W_{j}\|^2 \leq \|W_1\|^2 + (j-1)\delta \quad \mbox{for } j=1,\hdots,N.
	\end{equation}
	We observe that by basic properties of the spectral norm
	\begin{align}
		\|W_1\|^{2N} & = \|(W_1^T W_1)^N \| = \|W_1^T (W_1 W_1^T)^{N-1} W_1\|
		= \|W_1^T (W_2^T W_2 + D_2)^{N-1} W_1\|\notag\\
		& \leq \|W_1^T (W_2^T W_2)^{N-1} W_1\|
		+ \sum_{k=0}^{N-2} \binom{N-1}{k} \|W_1\| \|W_2^T W_2\|^{k} \|D_2\|^{N-k-1} \|W_1\|
		\label{binomial-bound}
		\\
		& \leq \|W_1^T (W_2^T W_2)^{N-1} W_1^T\|
		+ \|W_1\|^2 \left(\sum_{k=0}^{N-1} \binom{N-1}{k} \|W_2\|^{2k} \delta^{N-k-1} -  \|W_2\|^{2(N-1)}\right)\notag\\
		& = \|W_1^T W_2^T (W_2 W_2^T)^{N-2} W_2 W_1\|
		+ \|W_1\|^2 \left( (\|W_2\|^2 + \delta)^{N-1} - \|W_2\|^{2(N-1)}\right).\notag
	\end{align}
	In the first inequality, we expanded $(W_2^T W_2 + D_2)^{N-1}$ as a (matrix) polynomial in $W_2^T W_2$ and $D_2$, observing that the highest degree term is $(W_2^T W_2)^{N-1}$. Applying the triangle inequality separates this term from the rest of the polynomial. Applying the submultiplicativity of the spectral norm to all the summands and collecting terms (which now consist of commuting scalars, i.e, the spectral norms $\|W_1\|$, $\|W_2^T W_2\|$ and $\|D_2\|$) gives the sum in \eqref{binomial-bound}, where the index $k=N-1$ is left out as it was already taken care of in the first term in \eqref{binomial-bound}. 
	
	We continue in this way, replacing $(W_2 W_2^T)^{N-2}$ by $(W_3^T W_3 + D_3)^{N-2}$ and so on. Using also \eqref{Wj-bound}, we observe that similarly as above, for $j = 2,\hdots,N-1$,
	\begin{align*}
		& \|W_1^T \cdots W_{j}^T (W_j W_j^T)^{N-j} W_j \cdots W_1\|  \\  
		& \leq  \|W_1^T \cdots W_{j+1}^T (W_{j+1} W_{j+1}^T)^{N-j-1} W_{j+1} \cdots W_1\|\\
		& + \|W_j \|^2 \cdots \|W_1\|^2 \left( (\|W_{j+1}\|^2 + \delta)^{N-j} - \|W_{j+1}\|^{2(N-j)}\right) \\
		& \leq \|W_1^T \cdots W_{j+1}^T (W_{j+1} W_{j+1}^T)^{N-j-1} W_{j+1} \cdots W_1\|\\
		&+ \|W_1\|^2\left(\|W_1\|^2+\delta\right)\cdots\left(\|W_1\|^2 + (j-1)\delta\right)\left(\left(\|W_1\|^2 + (j+1 )\delta\right)^{N-j} - \left(\|W_1\|^2 + j\delta\right)^{N-j}\right)\\
		& \leq \|W_1^T \cdots W_{j+1}^T (W_{j+1} W_{j+1}^T)^{N-j-1} W_{j+1} \cdots W_1\|\\
		&+ \left(\|W_1\|^2+\delta\right)\left(\|W_1\|^2+2\delta\right)\cdots\left(\|W_1\|^2 + j\delta\right)\left(\left(\|W_1\|^2 + (j+1 )\delta\right)^{N-j} - \left(\|W_1\|^2 + j\delta\right)^{N-j}\right).
	\end{align*}
	Hereby, we have also used that the function $x \mapsto (x+\delta)^{N-j} - x^{N-j}$ is monotonically increasing in $x \geq 0$.
	With this estimate we obtain, noting below that the sum in the second line is telescoping,
	\begin{align*}
		&\|W_1\|^{2N}  \leq \|W_1^T \cdots W_N^T W_N \cdots W_1\| \\
		& + \sum_{j=1}^{N-1} \left(\|W_1\|^2+\delta\right)\left(\|W_1\|^2+2\delta\right)\cdots\left(\|W_1\|^2 + j\delta\right)\left(\left(\|W_1\|^2 + (j+1 )\delta\right)^{N-j} - \left(\|W_1\|^2 + j\delta\right)^{N-j}\right)\\
		& = \|W_N \cdots W_1\|^2 + 
		\left(\|W_1\|^2+\delta\right)\left(\|W_1\|^2+2\delta\right)\cdots\left(\|W_1\|^2 + N\delta\right) - \left(\|W_1\|^2+\delta\right)^N\\
		& = \|W\|^2 + Q_{N,\delta}(\|W_1\|^2+\delta).
	\end{align*}
	This proves the claimed inequality \eqref{ineq-W1-W}.
	
	The fact that for all $z,\alpha\in\RR$ it holds $z(z+\alpha)\leq\left(z+\frac{\alpha}{2}\right)^2$ implies that
	\begin{align}\label{polbound}
		(x+\delta)(x+2\delta)\cdots(x+N\delta) & =\left((x+\delta)(x+N\delta)\right)\cdot \left(x+2\delta)(x+(N-1)\delta)\right)\cdots \notag \\
		&\leq \left(x + \frac{N+1}{2}\delta\right)^N. 
	\end{align}
	Setting $x = \|W_1\|^2$, $a= \|W\|^2$ and $b = \frac{N+1}{2}\delta$ and combining  inequality \eqref{ineq-W1-W} and the definition of $Q_{N,\delta}$ with \eqref{polbound}, leads to $x^N \leq a + (x + b)^{N} - (x+\delta)^N$ and hence
	\begin{equation}\label{xN-bound}
		x^N \leq a + (x + b)^{N} - x^N.
	\end{equation}
	The mean-value theorem applied to the map $x \mapsto x^N$ gives
	\[
	(x+b)^N = x^N + N \xi^{N-1} b \quad \mbox{ for some } \xi \in [x, x+b].
	\]
	Hence,
	\[
	x^N \leq a + N\xi^{N-1}b \leq a + N(x+b)^{N-1} b.
	\]
	We assume now that  $a>0$ and will comment on the case $a=0$ below. Then the previous inequality implies 
	\begin{eqnarray*}
		\frac{x^N}{a}&\leq& 1+N\frac{\left(x+b\right)^{N-1}b}{a},
	\end{eqnarray*}
	which is equivalent to
	\begin{equation}\label{zN-ineq}
		\left(\frac{x}{a^{\frac{1}{N}}}\right)^N\leq 1+N\left(\frac{x}{a^{\frac{1}{N}}}+\frac{b}{a^{\frac{1}{N}}}\right)^{N-1}\frac{b}{a^{\frac{1}{N}}}.
	\end{equation}
	Setting $z=a^{-\frac{1}{N}} x$ 
	and $c=a^{-\frac{1}{N}} b$, 
	we obtain 
	$$z^N\leq 1+Nc\left(z+c\right)^{N-1}.$$
	We claim that $z\leq 1+2Nc.$ Assume on the contrary that $z> 1+2Nc$. Then \eqref{zN-ineq} gives 
	\begin{align*}
		z&\leq \frac{1}{z^{N-1}}+Nc\left(1+\frac{c}{z}\right)^{N-1}
		<1+Nc\left(1+\frac{c}{1+2Nc}\right)^{N-1}
		\leq1+Nc\left(1+\frac{1/2}{N}\right)^{N}\\
		&\leq1+Nce^{\frac{1}{2}}.
	\end{align*}
	The last inequality implies $z\leq1+2Nc$, which is a contradiction. Thus, we showed the claim that $z\leq 1+2Nc$, that is,
	$x a^{-\frac{1}{N}} \leq 1 + 2N a^{-\frac{1}{N}} b$, which is equivalent to
	\begin{equation}\label{W1boundx}
		x \leq  a^{\frac{1}{N}}+2Nb.
	\end{equation}
	The last inequality also holds in the case $a=0$, since for $a=0$ inequality \eqref{xN-bound} remains true if we replace $a$ by any positive number $\varepsilon$ and then by our reasoning above   $ x\leq \varepsilon^{\frac{1}{N}}+2Nb$. Since this is true for any $\varepsilon>0$, it follows that for $a=0$ we have $x\leq 2Nb= a^{\frac{1}{N}}+2Nb$, thus \eqref{W1boundx} also holds for $a=0$.
	
	Using the definitions of $a,b$ and $x$, we obtain from \eqref{W1boundx} that 
	\[
	\|W_1\|^2\leq \|W\|^{\frac{2}{N}}+N(N+1)\delta.
	\]
	For any $j = 1,\hdots,N$, \eqref{Wj-bound} implies then that
	\[
	\|W_j\|^2 \leq \|W_1\|^2 + (j-1)\delta
	\leq \|W\|^{\frac{2}{N}} + N(N+1)\delta + (j-1)\delta  \leq \|W\|^{\frac{2}{N}}
	+ (N+1)^2\delta.
	\]
	This completes the proof. 
\end{proof}


\subsection{Preservation of approximate balancedness}

The key ingredient to the proof of Theorem~\ref{conv_critical_pt} is the following proposition. It is a highly non-trivial extension of \cite[Lemma 3.1]{Du2018learning} from $N=2$ layers to an arbitrary number of layers. 



\begin{proposition}\label{prop1}
	Assume that $XX^T$ has full rank and that $\overrightarrow{W}(0)=(W_1(0),\hdots,W_N(0))$ has balancedness constant $\alpha\delta$ for some $\delta > 0$ and $\alpha \in [0,1)$. Assume that the positive stepsizes $\eta_k$ satisfy \eqref{cond:stepsize:convergence}. 
	Then the gradient descent iterates $\overrightarrow{W}(k) = (W_1(k),\hdots,W_N(k))$ 
	defined by \eqref{GD} satisfy, for all $k \in \NN_0$:
	\begin{itemize}
			\item[(1)] $\overrightarrow{W}(k)$ has balancedness constant $\delta$, i.e.,
			\begin{equation}\label{balanced-k}
				\|W^T_{j+1}(k)W_{j+1}(k) -W_{j}(k)W_j^T(k)\|
				\leq \delta  \quad \mbox{ for all } j=1,\hdots,N-1;
			\end{equation}
			\item[(2)] 
			$L^N\big(\overrightarrow{W}(k)\big)\leq L^N\big(\overrightarrow{W}(0)\big)$; 
			\item[(3)] 
			$\|W_j(k)\|^2\leq K_\delta = M^{\frac{2}{N}}+(N+1)^2\delta$ for $j=1,\hdots,N$;
			\item[(4)] 
			$L^N\big(\overrightarrow{W}(k)\big)	-L^N\big(\overrightarrow{W}(k+1)\big)
			\geq \sigma \eta_k\left\|\nabla L^N\big(\overrightarrow{W}(k)\big)\right\|_F^2$.
		\end{itemize}
\end{proposition}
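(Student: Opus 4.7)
The plan is to prove (1)--(4) simultaneously by induction on $k$, as the four statements are tightly interlocked. Once (1) and (2) hold at step $k+1$, part (3) at $k+1$ is immediate from Proposition \ref{Wj-bound-balancedness} applied to $\overrightarrow{W}(k+1)$ together with the bound $\|W(k+1)\|\leq M$ from Lemma \ref{lem-boundW:L1}; part (2) at $k+1$ comes directly from (4) at $k$; so the real work is to establish (4) at $k$ (using (3) at $k$ via a local smoothness argument) and (1) at $k+1$ (using the cancellation identity below together with (4) at all previous steps). The base case $k=0$ is immediate for (1)--(3), since balancedness constant $\alpha\delta\leq\delta$ is hypothesized, (2) is trivial, and Proposition \ref{Wj-bound-balancedness} gives (3); the argument for (4) is the same as in the inductive step.

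\textbf{Cancellation identity for (1).} Extending the cancellation observed for $N=2$ in \cite{Du2018learning}, the key observation from \eqref{GradientN} is the exact identity
\[
W_{j+1}(k)^T\,\nabla_{W_{j+1}}L^N(\overrightarrow{W}(k)) \;=\; \nabla_{W_j}L^N(\overrightarrow{W}(k))\,W_j(k)^T,
\]
since both sides equal $(W_N\cdots W_{j+1})^T\nabla L^1(W)(W_j\cdots W_1)^T$. Writing $G_j(\ell):=\nabla_{W_j}L^N(\overrightarrow{W}(\ell))$ and $\Delta_j(k):=W_{j+1}(k)^T W_{j+1}(k)-W_j(k)W_j(k)^T$, substitution of the GD update into $\Delta_j(k+1)$ forces the linear-in-$\eta_k$ cross terms to cancel exactly, leaving
\[
\Delta_j(k+1) \;=\; \Delta_j(k) + \eta_k^2\bigl[G_{j+1}(k)^T G_{j+1}(k) - G_j(k)G_j(k)^T\bigr].
\]
Iterating from $\|\Delta_j(0)\|\leq\alpha\delta$ and using $\|G_j(\ell)\|^2\leq\|G_j(\ell)\|_F^2$ and $\|G_{j+1}\|_F^2+\|G_j\|_F^2\leq\|\nabla L^N\|_F^2$, one obtains $\|\Delta_j(k+1)\|\leq \alpha\delta + \sum_{\ell=0}^k\eta_\ell^2\|\nabla L^N(\overrightarrow{W}(\ell))\|_F^2$. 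Telescoping (4) at steps $0,\ldots,k$ bounds $\sum_\ell\eta_\ell\|\nabla L^N(\overrightarrow{W}(\ell))\|_F^2\leq \sigma^{-1}L^N(\overrightarrow{W}(0))$, and the hypothesis \eqref{cond:stepsize:convergence:new} is calibrated exactly so that $(\max_\ell\eta_\ell)\cdot\sigma^{-1}L^N(\overrightarrow{W}(0))\leq(1-\alpha)\delta$, yielding (1) at $k+1$.

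\textbf{Smoothness estimate for (4).} The plan for (4) is a standard descent-lemma argument after establishing a uniform smoothness bound for $L^N$ on the enlarged region
\[
\mathcal{U} \;=\; \bigl\{\overrightarrow{V}\in\RR^{d_0\times d_1}\times\cdots\times\RR^{d_{N-1}\times d_N}:\|V_j\|^2\leq eK_\delta,\; j=1,\hdots,N\bigr\},
\]
which houses $\overrightarrow{W}(k)$ by (3) and $\overrightarrow{W}(k+1)$ by a short bootstrap using the step-size bound to control $\eta_k\|\nabla L^N\|_F$. Differentiating $L^N$ twice, each mixed partial splits into two kinds of multilinear expressions in the $W_i$'s: those inherited from $WXX^T$ in $\nabla L^1$ produce $2N-2$ matrix factors and contribute the $2eNK_\delta^{N-1}\|X\|^2$ part of $B_\delta$, while those inherited from $-YX^T$ produce $N-2$ matrix factors paired with $\|XY^T\|$ and contribute the $\sqrt{e}NK_\delta^{N/2-1}\|XY^T\|$ part; the factors $e$ and $\sqrt{e}$ track the dilation $K_\delta\to eK_\delta$ via $(1+1/N)^N\leq e$. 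The resulting bound $\|\nabla^2 L^N\|_{F\to F}\leq B_\delta$ on $\mathcal{U}$ combined with \eqref{cond:stepsize:convergence:new} (which forces $B_\delta\eta_k/2<1$ uniformly) gives the one-step descent $L^N(\overrightarrow{W}(k+1))\leq L^N(\overrightarrow{W}(k))-\sigma\eta_k\|\nabla L^N\|_F^2$ with $\sigma$ uniformly positive.

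\textbf{Main obstacle.} The technically hardest step is the Hessian estimate on $\mathcal{U}$ reproducing the precise constants of $B_\delta$ in \eqref{def:B-delta}: the Hessian of $L^N$ couples all $N$ layers through long products of the $W_i$'s, and a careful accounting of how each of the up to $2N-2$ factors is dilated, together with a non-circular verification that the single GD step lands in $\mathcal{U}$, is the delicate point. Once this is in place, (4) at $k$ follows from the standard quadratic descent bound, (2) and (3) at $k+1$ close as indicated, (1) at $k+1$ closes via the cancellation identity, and the induction and hence Proposition \ref{prop1} are complete.
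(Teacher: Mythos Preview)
Your outline follows the paper's argument closely: the same induction scheme, the same cancellation identity $W_{j+1}^T\nabla_{W_{j+1}}L^N=\nabla_{W_j}L^N\,W_j^T$ for step (1), and the same Taylor/descent-lemma route to (4). One quantitative point, however, does not close as written. The enlarged region $\mathcal{U}=\{\|V_j\|^2\le eK_\delta\}$ is too large to reproduce the constant $B_\delta$: on $\mathcal{U}$ the products of $2N-2$ factors $\|V_j\|$ are bounded only by $e^{N-1}K_\delta^{N-1}$, not $eK_\delta^{N-1}$, so the Hessian bound you obtain overshoots $B_\delta$ for $N\ge 3$, and then the step-size hypothesis \eqref{cond:stepsize:convergence} no longer suffices for (4). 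The paper handles this by bounding the Taylor intermediate point $A_\xi^i=W_i(k)+\xi(W_i(k+1)-W_i(k))$ directly: using (3) at $k$, the formula \eqref{GradientN}, the estimate $\|\nabla L^1(W(k))\|\le\sqrt{2L^N(\overrightarrow{W}(0))}\,\|X\|$ from Lemma~\ref{lem-boundW:L1}, and the branch $\eta_k\le 2(1-\sigma)/B_\delta$ of \eqref{cond:stepsize:convergence}, one gets the much tighter bound $\|A_\xi^i\|\le\bigl(1+\tfrac{1}{2N}\bigr)K_\delta^{1/2}$. With this dilation the $2N-2$ and $N-2$ products contribute at most $\bigl(1+\tfrac{1}{2N}\bigr)^{2N-2}\le e$ and $\bigl(1+\tfrac{1}{2N}\bigr)^{N-2}\le\sqrt{e}$ respectively, and the bound $\|\nabla^2 L^N(\overrightarrow{A}_\xi)\|_{F\to F}\le B_\delta$ follows exactly. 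Equivalently, your region should be $\{\|V_j\|\le(1+\tfrac{1}{2N})K_\delta^{1/2}\}$ rather than $\{\|V_j\|^2\le eK_\delta\}$; with that correction your argument matches the paper's proof.
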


\begin{proof}
	We will show statements (1), (2) and (3)  by induction under the condition that 
	\begin{equation}\label{cond:stepsize:convergence}
		\eta_k \leq \min\left\{ \frac{2(1-\sigma)}{B_\delta},\frac{\sigma(1-\alpha) \delta}{2 L^N(\overrightarrow{W}(0))} \right\}
		\quad \mbox{for all } k \in \mathbb{N},
	\end{equation}
	holds for some $\sigma \in (0,1)$. The choice  
	\[
	\sigma = \frac{4 L^N(\overrightarrow{W}(0))}{4 L^N(\overrightarrow{W}(0)) +  (1-\alpha)\delta B_\delta}
	\]
	reduces \eqref{cond:stepsize:convergence}
	to \eqref{cond:stepsize:convergence:new}.
	In the induction step for (2), we will show that if (3) holds for $k$, then (4) holds for $k$ as well. Below we will always denote $W(k) = W_N(k) \cdots W_1(k)$.
	
	Since $\overrightarrow{W}(0)$ has balancedness constant $\alpha \delta<\delta$ by assumption, \eqref{balanced-k} is clearly satisfied for $k=0$. Statement (2) is trivial for $k=0$. The bound in (3) follows from a direct combination of Proposition~\ref{Wj-bound-balancedness} with Lemma~\ref{lem-boundW:L1}, i.e., for $j=1,\hdots,N$,
	\[
	\|W_j(0)\|^{2} \leq \|W(0)\|^{\frac{2}{N}} + (N+1)^2\delta \leq \left(\frac{\sqrt{2 L^N(\overrightarrow{W}(0))} + \|Y\|}{\sigma_{\min}(X)}\right)^{\frac{2}{N}} + (N+1)^2  \delta  = M^{\frac{2}{N}} + (N+1)^2 \delta,
	\]
	using also the definition of $M$ in \eqref{Wj-bound-balancedness}.
	
	
	For the induction step, we assume that $(1)$, $(2)$ and $(3)$ hold for $0,1,\dots, k$ and prove that these three properties hold for  $k+1$ as well.

	\textbf{Step 1:} We first prove statement (2) for $k+1$. To do so, we will show that if statement (3) holds for $k$ then statement (4) holds for $ k$ as well. This also proves (4) once the induction for (1), (2) and (3) is completed. 	
	
	We consider the  Taylor expansion 
	\begin{align*}
		L^N\big(\overrightarrow{W}(k+1)\big)
		= &L^N\big(\overrightarrow{W}(k)\big)+\left\langle
		\nabla_{\overrightarrow{W}}L^N\big(\overrightarrow{W}(k)\big),\overrightarrow{W}(k+1)-\overrightarrow{W}(k) \right\rangle\\
		& +\frac{1}{2}\left\langle \left(
		\overrightarrow{W}(k+1)-\overrightarrow{W}(k)\right)^T\nabla^2 L^N\left(\overrightarrow{A}_{\xi}\right), \overrightarrow{W}(k+1)-\overrightarrow{W}(k)\right\rangle,
	\end{align*}
	where $\nabla L^N\big(\overrightarrow{W}(k)\big)=\begin{pmatrix}
		\nabla_{W_1}L^N\big(\overrightarrow{W}(k)\big)\\
		\vdots\\
		\nabla_{W_N}L^N\big(\overrightarrow{W}(k)\big)
	\end{pmatrix}$ and $\overrightarrow{A}_{\xi}=\left(A^1_{\xi},\dots,A^N_{\xi}\right)$ with 
	\[
	A^i_{\xi}=W_i(k)+\xi\left(W_i(k+1)-W_i(k)\right) \quad \mbox{ for some } \xi\in [0,1],~i=1,\dots,N.
	\]
	By definition of $\overrightarrow{W}(k+1)$, this Taylor expansion  can be written as
	\begin{eqnarray*}
		L^N\left(\overrightarrow{W}(k+1)\right)&=&L^N\left(\overrightarrow{W}(k)\right)-\eta_k\left\langle\nabla L^N\left(\overrightarrow{W}(k)\right),\nabla L^N\left(\overrightarrow{W}(k)\right) \right\rangle_F\\
		&&+\frac{1}{2}\eta^2_k\left\langle\nabla L^N\left(\overrightarrow{W}(k)\right), \nabla^2 L^N\left(\overrightarrow{A}_{\xi}\right)\nabla L^N\left(\overrightarrow{W}(k)\right)\right\rangle_F.
	\end{eqnarray*}
	By the Cauchy-Schwarz inequality, we obtain  
	\begin{eqnarray}\label{strN}
		L^N\left(\overrightarrow{W}(k)\right)-L^N\left(\overrightarrow{W}(k+1)\right)	&\geq&\eta_k\left\|\nabla L^N\left(\overrightarrow{W}(k)\right)\right\|_F^2 -\frac{1}{2}\eta^2_k\left\|\nabla L^N\big(\overrightarrow{W}(k)\big)\right\|_F^2\left\|\nabla^2 L^N\left(\overrightarrow{A}_{\xi}\right)\right\|_{F\to F} \nonumber\\
		&\geq& \Bigg(1-\frac{1}{2}\eta_k\left\|\nabla^2 L^N\left(\overrightarrow{A}_{\xi}\right)\right\|_{F\to F}\Bigg)\eta_k\left\|\nabla L^N\left(\overrightarrow{W}(k)\right)\right\|_F^2.
	\end{eqnarray}	
	The crucial point now is to show that $\left\|\nabla^2 L^N\left(\overrightarrow{A}_{\xi}\right)\right\|_{F\to F}$ is bounded by the constant $B_\delta$ defined in \eqref{def:B-delta}.
	By setting $\overrightarrow{\Delta}=\left(\Delta_{1},\dots, \Delta_{N}\right)$ with $\Delta_{j}\in \RR^{d_j\times d_{j-1}}$, $j=1,\hdots,N$,
	and writing $\nabla^2 L^N\left(\overrightarrow{W}\right)\left(\overrightarrow{\Delta},\overrightarrow{\Delta}\right)$ for 
	$\left\langle \overrightarrow{\Delta}, \nabla^2 L^N\left(\overrightarrow{W}\right)\overrightarrow{\Delta}\right\rangle $, 
	the quadratic form $\nabla^2 L^N\left(\overrightarrow{W}\right)\left(\overrightarrow{\Delta},\overrightarrow{\Delta}\right)$ defined by the Hessian can be written as
	\begin{eqnarray*}
		\nabla^2 L^N\left(\overrightarrow{W}\right)\left(\overrightarrow{\Delta},\overrightarrow{\Delta}\right)&=&\sum_{j=1}^{N}\sum_{i=1}^{N}\left\langle \Delta_{j}, \frac{\partial^2 L^N(\overrightarrow{W})}{\partial W_i\partial W_j}\Delta_{i} \right\rangle\\
		&=&\sum_{i=1}^{N}\left\langle \Delta_{i}, \frac{\partial^2 L^N(\overrightarrow{W})}{\partial W_i^2}\Delta_{i} \right\rangle+\sum_{j=1}^{N}\sum_{\substack{i=1 \\ i\neq j}}^{N}\left\langle \Delta_{j}, \frac{\partial^2 L^N(\overrightarrow{W})}{\partial W_i\partial W_j}\Delta_{i} \right\rangle.
	\end{eqnarray*}
	In order to compute mixed second derivates we introduce the notation
	\begin{align*}
		Q_i(\overrightarrow{W},\Delta_i) & = W_N \cdots W_{i+1} \Delta_i W_{i-1} \cdots W_1 X \\
		P_{i,j}(\overrightarrow{W}, \Delta_i, \Delta_j) & = \begin{cases} W_N \cdots W_{j+1} \Delta_j W_{j-1} \cdots W_{i+1} \Delta_i W_{i-1} \cdots W_1 & \mbox{ if } j > i. \\
			W_N \cdots W_{i+1} \Delta_i W_{i-1} \cdots W_{j+1} \Delta_j W_{j-1} \cdots W_1 & \mbox{ if } j < i,
		\end{cases}
	\end{align*}
	with the understanding that $W_{i-1} \cdot W_1 = \operatorname{Id}$ 
	for $i = 1$ and $W_N\cdots W_{i+1} = \operatorname{Id}$ for $i=N$.
	Using the first partial derivatives of $L^N$, cf.\ \eqref{GradientN}, 
	we obtain, for $i=1,\hdots, N$,
	\begin{eqnarray*}
		\left\langle \Delta_{i}, \frac{\partial^2 L^N(\overrightarrow{W})}{\partial W_i^2}\Delta_{i} \right\rangle&=&
		\langle Q_i(\overrightarrow{W}, \Delta_i), Q_i(\overrightarrow{W}, \Delta_i)\rangle = \| Q_i(\overrightarrow{W}, \Delta_i)\|_F^2.
	\end{eqnarray*}
	The mixed second order derivatives are given, for $i \neq j$, by
	\begin{align*}
		\left\langle \Delta_{i}, \frac{\partial^2 L^N(\overrightarrow{W})}{\partial W_i\partial W_j}\Delta_{j} \right\rangle & = \langle Q_i(\overrightarrow{W},\Delta_i), Q_j(\overrightarrow{W},\Delta_j)\rangle
		+ \langle L^N(\overrightarrow{W}), P_{i,j}(\overrightarrow{W}, \Delta_i,\Delta_j) \rangle.
	\end{align*} 
	This implies that 
	\begin{align*}
		\nabla^2 L^N\left(\overrightarrow{A_{\xi}}\right)\left(\overrightarrow{\Delta},\overrightarrow{\Delta}\right) & = \sum_{i=1}^N
		\|Q_i(\overrightarrow{A}_\xi,\Delta_i)\|_F^2 + \sum_{\substack{i,j = 1\\i \neq j}}^N
		\left\langle Q_i(\overrightarrow{A}_\xi,\Delta_i), Q_j(\overrightarrow{A}_\xi,\Delta_j)\right\rangle\\
		&  +
		\sum_{\substack{i,j = 1\\i \neq j}}^N \left\langle
		A_\xi XX^T - YX^T, P_{i,j}(\overrightarrow{A}_\xi, \Delta_i,\Delta_j) \right\rangle,
	\end{align*}
	where $A_\xi = A_\xi^N \cdots A_\xi^1$ as usual.
	The Cauchy-Schwarz inequality for the trace inner product together with $\|A B \|_F \leq \|A\| \|B\|_F$ for any matrices $A,B$ of matching dimensions gives, for $ i > j$,
	\begin{align*}
		& \left|\left\langle
		A_\xi XX^T - YX^T, P_{i,j}(\overrightarrow{A}_\xi, \Delta_i,\Delta_j) \right\rangle\right| \\
		& = \left| \tr((A_\xi XX^T - YX^T)^T A_\xi^N \cdots A_\xi^{i+1} \Delta_i A_\xi^{i-1} \cdots A_\xi^{j+1} \Delta_j A_\xi^{j-1} \cdots A_\xi^1) \right| \\
		& \leq \| (A_\xi XX^T - YX^T)^T A_\xi^N \cdots A_\xi^{i+1} \Delta_i \|_F \| A_\xi^{i-1} \cdots A_\xi^{j+1} \Delta_j A_\xi^{j-1} \cdots A_\xi^1\|_F\\
		& \leq \| A_\xi XX^T - YX^T\| \, \|A_\xi^N\|  \cdots \|A_\xi^{i+1}\| \, \|\Delta_i\|_F \, \|A_\xi^{i-1}\| \cdots \|A_\xi^{j+1}\| \|\Delta_j \|_F \|A_\xi^{j-1}\| \cdots \|A_\xi^1\|,
	\end{align*}
	and similarly, for $i< j$.
	Another application of the Cauchy-Schwarz inequality gives 
	$$|\langle Q_i(\overrightarrow{A}_\xi,\Delta_i), Q_j(\overrightarrow{A}_\xi,\Delta_j)\rangle| \leq \|Q_i(\overrightarrow{A}_\xi,\Delta_i)\|_F \|Q_j(\overrightarrow{A}_\xi,\Delta_j)\|_F.$$
	Consequently 
	\begin{align}
		\left|\nabla^2 L^N\left(\overrightarrow{A_{\xi}}\right)\left(\overrightarrow{\Delta},\overrightarrow{\Delta}\right)\right|
		& \leq 
		\sum_{i,j = 1}^N \| Q_i(\overrightarrow{A}_\xi,\Delta_i)\|_F \, \| Q_j(\overrightarrow{A}_\xi,\Delta_j)\|_F \nonumber\\
		& + 
		\sum_{\substack{i,j = 1\\i \neq j}}^N \|A_\xi XX^T - Y X^T\| \|\Delta_i \|_F \|\Delta_j \|_F \prod_{\substack{k=1 \\ k \neq i,j}}^N \|A_\xi^k\| \nonumber\\
		& \leq \|X\|^2 \sum_{i,j=1}^N \|\Delta_i\|_F \|\Delta_j\|_F \left(\prod_{\underset{k\neq  i}{k=1}}^{N} \|A^k_{\xi}\|\right)
		\left(\prod_{\underset{k\neq  j}{k=1}}^{N} \|A^k_{\xi}\|\right) \nonumber\\
		& + \sum_{\substack{i,j = 1\\i \neq j}}^N \|\Delta_i \|_F \, \|\Delta_j\|_F \left( \|X\|^2 \prod_{k=1}^N  \|A_\xi^k\|   + \|Y X^T\|\right) \prod_{\substack{k=1 \\ k \neq i,j}}^N \|A_\xi^k\|. \label{Hessbound}
	\end{align}
			Using the recursive definition of $W_i(k+1)$ and that $\xi \in [0,1]$ we further obtain, for $i=1,\dots,N$, 
			\begin{align*}
				\|A^i_{\xi}\|&= \|W_i(k)+\xi\left(W_i(k+1)-W_i(k)\right)\|
				\leq \|W_i(k)\|+\|W_i(k+1)-W_i(k)\| \\ 
				& = \|W_i(k)\|+\left\|\eta_k\nabla_{W_i}L^N\left(\overrightarrow{W}(k)\right)\right\|\\
				& = \|W_i(k)\| +
				\eta_k \left\|W_{i+1}^T(k)\cdots W_N^T(k)\nabla L^1(W(k))W_1(k)^T\cdots W_{i-1}(k)\right\|.
			\end{align*}
			It follows from \eqref{L1-gradient:bound} and the induction hypothesis (2) for $k$ that
			\begin{equation}\label{ineq:nabla}
				\|\nabla L^1(W(k))\| \leq \sqrt{2 L^N(\overrightarrow{W}(k))} \| X\| \leq \sqrt{2 L^N(\overrightarrow{W}(0))} \| X\|. 
			\end{equation}
			Using the induction hypothesis (3) for $k$ this gives
			\begin{align*}
				\|A^i_{\xi}\|&\leq \|W_i(k)\|+\eta_k \sqrt{2 L^N(\overrightarrow{W}(0))} \| X\| \left(\prod_{\underset{j\neq i}{j=1}}^{N}\|W_j(k)\|\right)  \\
				& \leq K_\delta^{1/2} + \eta_k \sqrt{2 L^N(\overrightarrow{W}(0))} \| X\| K_\delta^{\frac{N-1}{2}}.
			\end{align*}
			By the assumption \eqref{cond:stepsize:convergence} on the stepsize $\eta_k$ and the definitions of $K_\delta$ and $B_\delta$ 
			we have
			\begin{align*}
				\eta_k \sqrt{2L^N(\overrightarrow{W}(0))} \| X\| K_\delta^{\frac{N-1}{2}} 
				& \leq \frac{2(1-\sigma)}{B_\delta} \sqrt{2 L^N(\overrightarrow{W}(0))} \|X\| K_\delta^{\frac{N-1}{2}}
				\leq \frac{2 \sqrt{2 L^N(\overrightarrow{W}(0))} \|X\| K_\delta^{\frac{N-1}{2}}}{2eN K_\delta^{N-1} \|X\|^2} \\
				& \leq \frac{M \sigma_{\min}(X) \|X\| K_\delta^{\frac{N-1}{2}}}{eN K_\delta^{N-1} \|X\|^2}
				\leq \frac{(M^{\frac{2}{N}} + N^2 \delta)^{\frac{N}{2}} K_{\delta}^{\frac{N-1}{2}}}{eN K_{\delta}^{N-1}} \leq \frac{1}{2N} K_\delta^{\frac{1}{2}}.
			\end{align*}
			{In the first inequality of the last line we used that by definition of $M$ we have $$\sqrt{2 L^N(\overrightarrow{W}(0))}=M \sigma_{\min}(X)-\|Y\|\leq M \sigma_{\min}(X).$$}
			It follows  that
			$$
			\|A^i_{\xi}\|\leq \left(1 + \frac{1}{2N}\right) K_\delta^{1/2}. 
			$$  
			Substituting this bound into 
			\eqref{Hessbound}, we  obtain
			\begin{align*}
				& \left|\nabla^2 L^N\left(\overrightarrow{A_{\xi}}\right)\left(\overrightarrow{\Delta},\overrightarrow{\Delta}\right)\right| \\
				&\leq 
				\left(1+\frac{1}{2N}\right)^{2N-2}
				K_\delta^{N-1}
				\|X\|^2\sum_{i,j=1}^{N}\|\Delta_j\|_F\|\Delta_i\|_F  \\
				& +\left(\left(1+\frac{1}{2N}\right)^{2N-2} K_\delta^{N-1} 
				\|X\|^2+\left(1+\frac{1}{2N}\right)^{N-2} K_\delta^{N/2-1}
				\|XY^T\|\right) \sum_{j=1}^{N}\sum_{\underset{i\neq j}{i=1}}^{N}\|\Delta_j\|_F\|\Delta_i\|_F\displaybreak[2]\\
				&\leq e K_\delta^{N-1} \|X\|^2
				\left( \sum_{j=1}^N\| \Delta_j\|\right)^2+ 
				\left(e K_\delta^{N-1}
				\|X\|^2+
				e^{1/2} K_\delta^{N/2-1}\|XY^T\|\right) \left( \sum_{j=1}^N \|\Delta_j\|\right)^2\\
				& \leq \left[2e N K_\delta^{N-1} 
				\|X\|^2 + \sqrt{e} N K_\delta^{N/2-1} 
				\|XY^T\| \right] \|\overrightarrow{\Delta}\|_F^2,
			\end{align*}
			where we have used that $(1+1/(2N))^{2N} \leq e$ and that  $\sum_{j=1}^{N}\|\Delta_{W_j}\|_F \leq \sqrt{N} \|\overrightarrow{\Delta}\|_F$.
			Hence, we derived that
			\[
			\left\|\nabla^2L^N\left(\overrightarrow{A_{\xi}}\right)\right\|_{F\to F} \leq
			2e N K_\delta^{N-1} 
			\|X\|^2 + \sqrt{e} N K_\delta^{\frac{N}{2}-1} 
			\|XY^T\|
			= B_\delta.
			\]
			Substituting this estimate into \eqref{strN} and using that the step sizes satisfy \eqref{cond:stepsize:convergence} gives
			\begin{align}
				& L^N\left(\overrightarrow{W}(k)\right)	-L^N\left(\overrightarrow{W}(k+1)\right)\geq \Bigg(1-\frac{1}{2}\eta_k B_\delta\Bigg)\eta_k\left\|\nabla L^N\left(\overrightarrow{W}(k)\right)\right\|_F^2 \notag\\
				& \geq \sigma \eta_k \left\|\nabla L^N\left(\overrightarrow{W}(k)\right)\right\|_F^2 \geq 0.\label{strN2}
			\end{align} 
			This shows the statement (4) for $k$.
			It follows by the induction hypothesis (2) for $k$ that
			$$L^N \left(\overrightarrow{W}(0)\right)  \geq L^N\left(\overrightarrow{W}(k)\right) \geq L^N\left(\overrightarrow{W}(k+1)\right).$$
			This shows statement (2) for $k+1$.
			
			\textbf{Step 2:} Let us now show that statement (1) holds at iteration $k+1$. 
			For $j=1,\hdots,N-1$ we obtain
			\begin{align*}			&\|W^T_{j+1}(k+1)W_{j+1}(k+1) -W_j(k+1)W_j^T(k+1)\|\\
				&=\left\|\Big(W_{j+1}^T(k)-\eta_k\nabla^T_{W_{j+1}}L^N\big(\overrightarrow{W}(k)\big)\Big)\Big(W_{j+1}(k)-\eta_k\nabla_{W_{j+1}}L^N\big(\overrightarrow{W}(k)\big)\Big) \right.\\
				& \quad \left.-\Big(W_j(k)-\eta_k \nabla_{W_j}L^N\big(\overrightarrow{W}(k)\big)\Big)\Big(W^T_j(k)-\eta_k\nabla^T_{W_j}L^N\big(\overrightarrow{W}(k)\big)\Big)\right\|\displaybreak[2]\\
				&=\left\|W^T_{j+1}(k)W_{j+1}(k)-W_j(k)W_j^T(k)\right.\\
				& \quad +\eta_k\Big(- W_{j+1}^T(k)W_{j+2}^T(k)\dots W_N^T(k)\nabla L^1(W(k))W_1^T(k)\cdots W_j^T(k)\\
				& \qquad \qquad - W_j(k)\cdots W_1(k)\nabla^T L^1(W(k))W_N(k)\cdots W_{j+2}(k)W_{j+1}(k) \Big)\\
				&\qquad \qquad  + W_j(k)W_{j-1}(k)\cdots W_1(k)\nabla^T L^1(W(k))W_N(k)\cdots W_{j+2}(k)W_{j+1}(k)\\
				& \qquad \qquad + W_{j+1}^T(k)W_{j+2}^T(k)\cdots W_N^T(k)\nabla L^1(W(k))W_1^T(k)\cdots W_{j-1}^T(k)W_j^T(k)\Big)\\
				& \left. \quad +  \eta_k^2 \left(\nabla^T_{W_{j+1}}L^N\big(\overrightarrow{W}(k)\big)\nabla_{W_{j+1}}L^N\big(\overrightarrow{W}(k)\big)- \nabla_{W_{j}}L^N\big(\overrightarrow{W}(k)\big)\nabla^T_{W_{j}}L^N\big(\overrightarrow{W}(k)\big)\right)\right\|\displaybreak[2]\\
				%
					%
					&\leq \left\|W^T_{j+1}(k)W_{j+1}(k)-W_j(k)W_j^T(k)\right\|
					+ \eta_k^2 \left(\|\nabla_{W_{j+1}} L^N(\overrightarrow{W}(k))\|^2 +\|\nabla_{W_{j}} L^N(\overrightarrow{W}(k))\|^2 \right).
				\end{align*}
				Applying this inequality repeatedly, we obtain
				\begin{align}
					&\|W^T_{j+1}(k+1)W_{j+1}(k+1) -W_j(k+1)W_j^T(k+1)\| \notag\\
					& \leq \|W^T_{j+1}(0)W_{j+1}(0) -W_j(0)W_j^T(0)\| 
					+ \sum_{\ell=0}^k \eta_\ell^2 \left(\|\nabla_{W_{j+1}} L^N(\overrightarrow{W}(\ell))\|^2 +\|\nabla_{W_{j}} L^N(\overrightarrow{W}(\ell))\|^2\right) \notag\\
					& \leq \alpha \delta + 2 \left(\max_{\ell=0,\hdots,k} \eta_\ell\right) \sum_{\ell=0}^k \eta_\ell \|\nabla L^N(\overrightarrow{W}(\ell))\|_{F}^2, \label{aux-upper}
				\end{align}
				where we have used that $\overrightarrow{W}(0)$ has balancedness constant $\alpha \delta$ by assumption and that 
				\begin{align*}
					\|\nabla L^N(\overrightarrow{W}(k))\|_{F \to F}^2 & \geq \max_{\ell = 1,\hdots,N} \|\nabla_{W_\ell} L^N(\overrightarrow{W}(k))\|^2 \\
					& \geq \frac{1}{2} \left( \|\nabla_{W_j} L^N(\overrightarrow{W}(k))\|^2 +\|\nabla_{W_j+1} L^N(\overrightarrow{W}(k))\|^2 \right).
				\end{align*}
				The inequality \eqref{strN2} from the previous step gives
				\begin{equation}\label{aux-lower}
					L^N(\overrightarrow{W}(0)) - L^N(\overrightarrow{W}(k+1)) = \sum_{j=0}^k \left(L^N(\overrightarrow{W}(j)) - L^N(\overrightarrow{W}(j+1))\right)
					\geq \sigma \sum_{j=0}^k \eta_k \|\nabla L^N(\overrightarrow{W}(k))\|_F^2.
				\end{equation}
				Combining inequalities \eqref{aux-upper} and \eqref{aux-lower} yields
				\begin{align*}
					& \|W^T_{j+1}(k+1)W_{j+1}(k+1) -W_j(k+1)W_j^T(k+1)\| \\
					& \leq \alpha \delta + \frac{2}{\sigma} \left(\max_{\ell=0,\hdots,k} \eta_\ell\right) \left(L^N(\overrightarrow{W}(0)) - L^N(\overrightarrow{W}(k+1))\right)\\ & 
					\leq  \alpha \delta + \frac{2}{\sigma} \left(\max_{\ell=0,\hdots,k} \eta_\ell\right) L^N(\overrightarrow{W}(0)) 
					\leq \alpha \delta + (1-\alpha) \delta = \delta,
				\end{align*}
				where we have used Condition \eqref{cond:stepsize:convergence} on the stepsizes.
						This proves statement (1) for $k+1$.

						\textbf{Step 3:} 
						For the proof of statement (3) for $k+1$ we use that we have already shown that (1) and (2) hold for $k+1$.
						It follows from Proposition~\ref{Wj-bound-balancedness} and Lemma~\ref{lem-boundW:L1} that
						\[
						\|W_j(k+1)\|^2 \leq \|W(k+1)\|^{\frac{2}{N}} + (N+1)^2 \delta
						\leq \left(\frac{\sqrt{2L^N(\overrightarrow{W}(0))} + \|Y\|}{\sigma_{\min}(X)}\right)^{\frac{2}{N}} + (N+1)^2 \delta = K_\delta.
						\]
						This shows (3) for $k+1$ and completes the proof of the proposition.
					\end{proof}
					


				\subsection{Convergence of gradient descent to a critical point}  We will use a result from \cite{absil2005convergence} to prove Theorem \ref{conv_critical_pt}, which is based on the following definition.
				
				\begin{definition}[Strong descent conditions \cite{absil2005convergence}]\label{strong_descent_cond}	
					We say that a sequence $x_k\in\RR^n$ satisfies the strong descent conditions (for a differentiable function $f:\RR^n\to \RR$) if 
					\begin{align}\label{descond1}
						f(x_k)-f(x_{k+1}) & \geq\sigma\|\nabla f(x_k)\|\|x_{k+1}-x_k\|\\
						\mbox{ and } \quad \label{descond2}
						f(x_{k+1})=f(x_k) & \implies x_{k+1}=x_k
					\end{align}
					hold for some $\sigma>0$ and for all $k$ larger than some $K.$
				\end{definition}
				
				The next theorem is essentially an extension of Lojasiewicz' theorem to discrete variants of gradient flows.
				
				\begin{theorem}\cite[Theorem 3.2]{absil2005convergence}\label{absil2005}
					Let $f:\RR^n\to\RR$ be an analytic cost function. Let the sequence $\{x_k\}_{k=1,2,\dots}$ satisfy the strong descent conditions (Definition \ref{strong_descent_cond}). Then, either $\lim\limits_{k\to\infty}\|x_k\|=+\infty,$ or there exists a single point $x^*\in \RR$ such that $$\lim\limits_{k\to\infty}x_k=x^*$$
				\end{theorem}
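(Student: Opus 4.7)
The plan is to invoke the Lojasiewicz inequality for real-analytic functions, which near any point $x^*$ provides constants $\delta, C > 0$ and an exponent $\theta \in [1/2, 1)$ with
\[
|f(x) - f(x^*)|^{\theta} \leq C \, \|\nabla f(x)\| \qquad \text{whenever } \|x - x^*\| < \delta,
\]
and then to combine this with the strong descent conditions to obtain summability of the increments $\|x_{k+1} - x_k\|$ along the tail of the sequence.

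Assume $\|x_k\|$ does not diverge to $+\infty$, so $(x_k)$ admits an accumulation point $x^*$ along a subsequence $x_{k_j} \to x^*$. Since condition \eqref{descond1} forces $f(x_k)$ to be non-increasing, continuity of $f$ yields $f(x_k) \downarrow f(x^*)$. If $f(x_{k_0}) = f(x^*)$ for some $k_0$, then $f$ is constant on the tail beyond $k_0$ and \eqref{descond2} forces $x_k = x_{k_0}$ for every $k \geq k_0$; since $x^*$ is an accumulation point this gives $x_{k_0} = x^*$, finishing the argument.

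Otherwise $\varepsilon_k := f(x_k) - f(x^*) > 0$ for every sufficiently large $k$. Set $\phi(s) = s^{1-\theta}$, which is concave and increasing on $(0, \infty)$. For iterates $x_k$ inside the Lojasiewicz ball of radius $\delta$ about $x^*$, concavity of $\phi$, the Lojasiewicz bound $\varepsilon_k^{\theta} \leq C\|\nabla f(x_k)\|$, and condition \eqref{descond1} chain together to give
\[
\phi(\varepsilon_k) - \phi(\varepsilon_{k+1}) \;\geq\; (1-\theta)\, \varepsilon_k^{-\theta}\,(f(x_k) - f(x_{k+1})) \;\geq\; \frac{(1-\theta)\sigma}{C}\, \|x_{k+1} - x_k\|.
\]
Telescoping this inequality over $k$ makes $\sum_k \|x_{k+1} - x_k\|$ finite, so $(x_k)$ is a Cauchy sequence whose limit must coincide with the accumulation point $x^*$.

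The delicate point is the standard capture argument guaranteeing the iterates in fact remain inside the Lojasiewicz ball of radius $\delta$ around $x^*$ from some index onward. Since $x_{k_j} \to x^*$ and $\phi(\varepsilon_{k_j}) \to 0$, we can pick $j$ with $\|x_{k_j} - x^*\| + \frac{C}{(1-\theta)\sigma}\,\phi(\varepsilon_{k_j}) < \delta$; a straightforward induction on $K \geq k_j$ then bounds the partial sum $\|x_{K+1} - x_{k_j}\| \leq \frac{C}{(1-\theta)\sigma}\,\phi(\varepsilon_{k_j})$ and shows $\|x_K - x^*\| < \delta$ for all such $K$, so that the displayed estimate is valid throughout the tail and the convergence $x_k \to x^*$ follows.
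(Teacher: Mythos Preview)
The paper does not supply its own proof of this theorem; it is quoted verbatim from \cite[Theorem~3.2]{absil2005convergence} and used as a black box in the proof of Theorem~\ref{conv_critical_pt}. Your argument is correct and is precisely the standard Lojasiewicz-inequality route that the cited reference follows: extract an accumulation point, use monotonicity of $f(x_k)$ to identify the limiting function value, desingularize via $\phi(s)=s^{1-\theta}$, combine concavity of $\phi$ with the Lojasiewicz estimate and \eqref{descond1} to bound $\|x_{k+1}-x_k\|$ by a telescoping term, and run the capture argument to trap the tail of the sequence in the Lojasiewicz neighborhood. There is nothing to compare beyond this, since your proof and the one in \cite{absil2005convergence} are essentially the same.
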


				Now we are ready to prove Theorem \ref{conv_critical_pt}.
				
				\begin{proof}
					%
					By point (4) of Proposition~\ref{prop1} and since  $\overrightarrow{W}(k+1) - \overrightarrow{W}(k)=\eta_k\nabla L^N\left(\overrightarrow{W}(k)\right)$ for all $k\in \NN_0$, we have 
					\begin{equation}
						L^N\big(\overrightarrow{W}(k)\big)	-L^N\big(\overrightarrow{W}(k+1)\big)\geq \sigma \left\|\nabla L^N\big(\overrightarrow{W}(k)\big)\right\|_F\left\|\overrightarrow{W}(k+1) - \overrightarrow{W}(k)\right\|_F,\label{strN3}
					\end{equation}
					which means that the first part \eqref{descond1} of the strong descent condition  holds. This implies then that also the second part \eqref{descond2} of the strong descent condition holds, since 
					if $L^N\big(\overrightarrow{W}(k+1)\big)	= L^N\big(\overrightarrow{W}(k))$, it follows that $$\left\|\nabla L^N\big(\overrightarrow{W}(k)\big)\right\|_F\left\|\overrightarrow{W}(k+1) - \overrightarrow{W}(k)\right\|_F=0,$$ hence $\overrightarrow{W}(k+1) = \overrightarrow{W}(k)$ or  $\nabla L^N\big(\overrightarrow{W}(k)\big)=0$, but the latter again implies  $\overrightarrow{W}(k+1) = \overrightarrow{W}(k)$. Thus indeed $\overrightarrow{W}(k+1) = \overrightarrow{W}(k)$ if  $L^N\big(\overrightarrow{W}(k+1)\big)	= L^N\big(\overrightarrow{W}(k)$.
					
					Since by Proposition~\ref{prop1}, the sequence $(\overrightarrow{W}(k))_{k\in\NN_0}$ is bounded and $L^N$ is analytic, it follows from 
					Theorem~\ref{absil2005} that there exists $\overrightarrow{W}^*$ such that 
					\[
					\lim\limits_{k\to\infty}\overrightarrow{W}(k)=\overrightarrow{W}^*.
					\]
					
					It  remains to show that $\overrightarrow{W}^*$ is a critical point of $L^N$. 
					Since $\nabla L^N\big(\overrightarrow{W}\big)$ is continuous in $\overrightarrow{W}$, it follows that 
					$\nabla L^N\big(\overrightarrow{W}^*\big)=\lim\limits_{k\to\infty} \nabla L^N\big(\overrightarrow{W}(k)\big)$  and that  
					\[
					\|\nabla L^N\big(\overrightarrow{W}^*\big)\|_F=\lim\limits_{k\to\infty} \|\nabla L^N\big(\overrightarrow{W}(k)\big)\|_F=:c.
					\]
					
					In order to show that   $\overrightarrow{W}^*$ is a critical point, it suffices to show that  $c=0$.
					A repeated application of point (4) of Proposition~\ref{prop1} gives 
					\[
					L^N\big(\overrightarrow{W}(0)\big)	-L^N\big(\overrightarrow{W}(k+1)\big)\geq\sigma\sum_{j=0}^{k}\eta_j\left\|\nabla L^N\big(\overrightarrow{W}(k)\big)\right\|_F^2 \quad \mbox{ for any } k \in \mathbb{N},
					\]
					hence, taking the limit,
					\[
					L^N\big(\overrightarrow{W}(0)\big)\geq \sigma\sum_{k=0}^{\infty}\eta_k\left\|\nabla L^N\big(\overrightarrow{W}(k)\big)\right\|_F^2. 
					\]
					Assume now that $c\neq 0$. Then $c>0$ and  there exists $k_0\in \NN$ such that 
					\[
					\left\|\nabla L^N\big(\overrightarrow{W}(k)\big)\right\|_F\geq \frac{c}{2}~~~\forall k\geq k_0.
					\]
					But then 
					\[
					L^N\big(\overrightarrow{W}(0)\big)\geq\sigma\sum_{k=k_0}^{\infty}\eta_k\|\nabla L^N\big(\overrightarrow{W}(k)\big)\|_F^2\geq \frac{c^2}{4}\sigma\sum_{k=k_0}^{\infty}\eta_k,
					\]
					which by $\sigma > 0$ contradicts our assumption that $\displaystyle\sum_{k=0}^{\infty}\eta_k=\infty$. 
					Thus indeed $c=0$ and $\overrightarrow{W}^*$ is a critical point of $L^N$.
				\end{proof}

				\section{Convergence to a global minimum for almost all initializations}
				\label{sec:almost-all}
				


				Let us now  transfer \cite[Theorem 6.12]{bah2019learning} to our situation of the gradient descent method by showing Theorem~\ref{thm:GD-allmost-all}.
				%
				%
				Our proof is based on the following abstract theorem, 
				which basically states that gradient descent schemes
				avoid strict saddle points for almost all initializations. The case of constant step sizes (condition (1)) was shown in \cite[Proposition 1]{Lee19}, while 
				the one for step sizes converging to zero was proven in \cite[Theorem 5.1]{Panageas2019firstorder}.
				We call
				a critical point $z^*$ of a twice continously differentiable function $f$ a strict saddle point if the Hessian $\nabla^2 f (z^*)$ has at least one negative eigenvalue. Note that an analysis of the strict saddle points of $L^N$ has been performed in \cite{bah2019learning}, extending \cite{kawag16,TragerKohnBruna19}.
				
				\begin{theorem}\label{abstract-almost-sure-convergence}
					Let $f : \mathbb{R}^p \to \mathbb{R}$ be a twice continuously differentiable function and consider the gradient descent scheme 
					\[
					z(k+1) = z(k) - \eta_k \nabla f(z(k)),
					\]
					where $(\eta_k)$ satisfies one of the following conditions.
					\begin{itemize}
						\item[(1)] The sequence $(\eta_k)$ is constant, i.e., $\eta_k = \eta$ for some $\eta > 0$ for all $k \in \mathbb{N}$.
						\item[(2)] It holds
						$$
						\eta_k \geq C \frac{1}{k} \quad \mbox{ for some } C > 0 \quad \mbox{ and } \lim_{k \to \infty} \eta_k = 0.
						$$
					\end{itemize}
					Then the set of initializations $z(0) \in \mathbb{R}^p$ such that $(z(k))_k$ converges to a strict saddle point of $f$ has measure zero. 
				\end{theorem}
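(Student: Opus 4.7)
The plan is to derive both cases from the Center–Stable Manifold Theorem applied to the gradient descent update, viewed as a (time-dependent) iteration on $\mathbb{R}^p$. Under condition (1) the iteration is autonomous: setting $g(z) = z - \eta \nabla f(z)$, the fixed points of $g$ are precisely the critical points of $f$, and the Jacobian at any such point $z^\ast$ is
\[
Dg(z^\ast) = I - \eta \nabla^2 f(z^\ast).
\]
If $z^\ast$ is a strict saddle then $\nabla^2 f(z^\ast)$ has a negative eigenvalue $\lambda < 0$, so $Dg(z^\ast)$ has the eigenvalue $1 - \eta \lambda > 1$, producing a nontrivial unstable subspace. The classical Stable Manifold Theorem (e.g.\ Shub) then furnishes a local center–stable manifold $W^{cs}_{\mathrm{loc}}(z^\ast)$ through $z^\ast$, of dimension strictly less than $p$ and hence of Lebesgue measure zero, such that any orbit of $g$ that stays near $z^\ast$ and converges to $z^\ast$ must lie in $W^{cs}_{\mathrm{loc}}(z^\ast)$.

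To globalize, I would first ensure $g$ is a $C^1$-diffeomorphism on the relevant domain. This can be arranged either by choosing $\eta$ small enough so that $I - \eta \nabla^2 f$ is always invertible on a region where the iteration stays, or (following Lee et al.) by restricting attention to the open set where $Dg$ is invertible and noting that the complement is mapped into lower-dimensional strata. The set of ``bad'' initializations is then
\[
B \;=\; \bigcup_{k \ge 0} g^{-k}\!\left( \bigcup_{z^\ast} W^{cs}_{\mathrm{loc}}(z^\ast) \right),
\]
where the inner union is over strict saddles. Since $\mathbb{R}^p$ is second-countable, the stable manifolds can be covered by countably many neighborhoods in each of which the local-manifold construction is uniform; hence the inner union is a countable union of measure-zero sets. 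Preimages under the $C^1$ diffeomorphism $g$ preserve null sets, so $B$ has Lebesgue measure zero.

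The main obstacle is condition (2): here the update maps $g_k(z) = z - \eta_k \nabla f(z)$ change with $k$, so the autonomous theorem does not apply directly. I would appeal to a non-autonomous stable manifold theorem, producing for each strict saddle $z^\ast$ a time-dependent center–stable set of positive codimension. The heuristic justification is that, along an unstable eigendirection, the linearized maps expand by factors $1 - \eta_k \lambda > 1$ whose product diverges precisely because $\sum_k \eta_k = \infty$ (guaranteed by $\eta_k \ge C/k$), while $\eta_k \to 0$ keeps each $g_k$ sufficiently close to the identity for a graph-transform or Perron-style construction of an invariant stable graph to converge. The hard technical step is controlling this infinite composition of near-identity maps uniformly in a neighborhood of each strict saddle; this is where I would likely either cite or carefully adapt the non-autonomous framework of Panageas–Piliouras.

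Once the local non-autonomous center–stable set is in hand, the globalization argument is essentially the same as in case (1): a countable cover of the strict saddle set, countable union over iteration times, and the fact that each $g_k$ is a local $C^1$-diffeomorphism (for $\eta_k$ small, which follows from $\eta_k \to 0$) to preserve null sets under inverse images. One should finally check that the $C^2$ hypothesis on $f$ — rather than any global Lipschitz or smoothness bound — is enough, since all constructions are local around the strict saddles.
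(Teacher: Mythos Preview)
The paper does not prove this theorem; it is quoted from the literature, with case~(1) attributed to \cite[Proposition~1]{Lee19} and case~(2) to \cite[Theorem~5.1]{Panageas2019firstorder}. Your sketch follows precisely the strategy of those references --- Center--Stable Manifold Theorem at each strict saddle, then globalization via countable unions and pullback along the iteration maps --- so in that sense there is nothing to compare.

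One technical point deserves more care than you give it. In \cite{Lee19} the standing assumption is that $\nabla f$ is globally $L$-Lipschitz and $\eta < 1/L$, which forces $Dg(z)=I-\eta\nabla^2 f(z)$ to be invertible everywhere and hence $g$ to be a global $C^1$-diffeomorphism; this is exactly what makes preimages of null sets null. Under the bare $C^2$ hypothesis written here, $\det Dg$ can vanish on a set of positive measure, and then your pullback step fails: your suggestion to ``restrict to the open set where $Dg$ is invertible'' does not help, since initializations in the complement are not automatically negligible. The paper's application is unaffected --- $L^N$ is a polynomial, so $\det(I-\eta\nabla^2 L^N)$ is a nonzero polynomial whose zero set has measure zero, and in any case the step-size condition \eqref{cond:stepsize:convergence:2} bounds $\eta$ by the reciprocal of a Hessian bound on the region where the iterates live. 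But as a freestanding statement the theorem needs either the Lipschitz hypothesis of \cite{Lee19} or an argument specific to real-analytic $f$, and your proof sketch should say which route you take rather than defer it to a final remark.
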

				Now we are ready to prove Theorem \ref{thm:GD-allmost-all}.
				
				\begin{proof}
					Due to definitions \eqref{def:deltaB}, \eqref{def:MB} of the constants $\delta_\mathcal{B}$, $L_{\mathcal{B}}$ and $M_{\mathcal{B}}$ together with condition \eqref{cond:stepsize:convergence:2} on the step sizes $\eta_k$, the conditions of Theorem~\ref{conv_critical_pt} are satisfied for each initialization $\overrightarrow{W}(0) \in \mathcal{B}$. Hence, $\overrightarrow{W}(k)$ converges to a critical point of $L^N$ for all $\overrightarrow{W}(0) \in \mathcal{B}$.
					%
					By Theorem~\ref{abstract-almost-sure-convergence}
					convergence of gradient descent with initial values in $\mathcal B$ and with step sizes $\eta_k$ to a strict saddle point occurs only for subset of $\mathcal{B}$ that has measure zero. 
					
					The rest of the proof is the same as the corresponding reasoning in  the proof of \cite[Theorem 6.12]{bah2019learning}. Let us repeat only the main aspects from \cite{bah2019learning}.
					We denote by $\overrightarrow{W}=(W_1,\hdots,W_N)$ the limit of $\overrightarrow{W}(k)$, $W= W_N\cdots W_1$ and $k = \operatorname{rank}(W)$. Then $k \leq r$ and $W$ is a critical point of $L^1$ restricted to manifold $\mathcal{M}_k$ of rank $k$ matrices \cite[Proposition 6.8(a)]{bah2019learning}. Then \cite[Proposition 6.6(1)]{bah2019learning} implies that $k \leq q$. If $W$ is not a global minimizer of $L^1$ restricted to $\mathcal{M}_k$ then $W$ is a strict saddle point of $L^N$ by \cite[Proposition 6.9]{bah2019learning}. As argued above, the set of initializations converging to such a point has measure zero, showing part (a). (Note that for $N \geq 3$ and $k < \min\{r,q\}$ a global minimizer of $L^1$ restricted to $\mathcal{M}_k$ may correspond to a non-strict saddle point $\overrightarrow{W}$ of $L^N$, see \cite[Proposition 6.10]{bah2019learning}.)
					If $N = 2$, then by \cite[Proposition 6.11]{bah2019learning} any critical point $\overrightarrow{W}=(W_1,W_2)$ such that $W=W_2 W_1$ is a global minimum of $L^1$ restricted to $\mathcal{M}_k$ for some $k<\bar r$ is a strict saddle point of $L^2$, which shows part (b) of the theorem.
				\end{proof}

				\section{Numerical experiments}
				\label{sec:numerical}
				
			
			In this section we illustrate our theoretical results 
			with numerical experiments.
			In particular, we test convergence of gradient descent for various choices of constant and decreasing step sizes and with
			$N=2$, $N=3$ and $N=5$ layers.
			
			The sample size is chosen as $m=3\cdot d$ with $d=70$. For our experiments we generate our dataset $X\in \RR^{d_x\times m}$ randomly with entries drawn from a mean zero Gaussian distribution 
			with variance $\sigma^2=1/d$,
			where $d_x=d$. The data matrix $Y \in \RR^{d \times m}$ is a random matrix of rank $r=2$, which is generated as described below. We initialize the weight matrices $W_j \in \RR^{d_j \times d_{j-1}}$
			such that $\overrightarrow{W}(0)=(W_1,\hdots,W_N)$ is balanced, i.e.,
			has balancedness constant $0$ so that $\alpha = 0$
			in Theorems~\ref{conv_critical_pt} and \ref{thm:GD-allmost-all}, in the following way. 
			The rank parameter is chosen as $r=2$ 
			and the dimensions $d_j$ 
			as 
			\[
			d_0 = d, \qquad d_1 = r , \qquad d_j=\operatorname{round}\left(r+(j-1)\frac{d-r}{N-1}\right), \quad j=2,\dots,N, 
			\]
			where $\operatorname{round}(z)$ rounds a real number $z$ to the nearest integer.
			We randomly generate
			orthogonal matrices $U_1 \in \RR^{d \times d}$, $V_j \in \RR^{d_{j} \times d_{j}}$, $j=1,\hdots,N$, according to the uniform distribution on the corresponding unitary groups and let $U_j \in \RR^{d_{j} \times d_{1}}$, $j=2,\hdots,N$ be the matrix composed of the first $d_1$ columns of $V_{j-1}$. 
			We then set
			\[
			W_j = V_j I_{d_j,d_1} U_j^T,
			\]
			where for any $n_1,n_2\in \NN$ the matrix $I_{n_1,n_2} \in \RR^{n_1\times n_2}$ is a rectangular diagonal matrix with ones on the diagonal.
			By orthogonality and construction of $U_{j+1}$, it follows that for all $j=1,\hdots,N-1$, we have
			\begin{align*}
				W_{j+1}^T W_{j+1} &= U_{j+1} I_{d_1,d_{j+1}} V_{j+1}^T V_{j+1} I_{d_{j+1},d_{1}} U_{j+1}^T = U_{j+1} U_{j+1}^T = V_j I_{d_{j},d_{1}} U_j^T U_j I_{d_{1},d_j} V_j^T \\
				&= W_j W_j^{T} 
			\end{align*}
			so that the tuple $(W_1,\hdots,W_N)$ is balanced.
			
			The random matrix $Y \in \RR^{d \times m}$ of rank $2$ is generated as $Y = \widetilde{W_N} \cdots \widetilde{W_1}X$ with matrices $\widetilde{W_j}$ generated in the same way as the matrices $W_j$. (We decided to choose a matrix $Y$ of rank $2$ so that the global minimizer of $L^1$ is also of rank $2$ and convergence to it means that $L^N$ converges to zero, which is simple to check.)
			
			\begin{figure}[t]
				\begin{subfigure}{0.45\textwidth}\centering\includegraphics[width=0.99\textwidth]{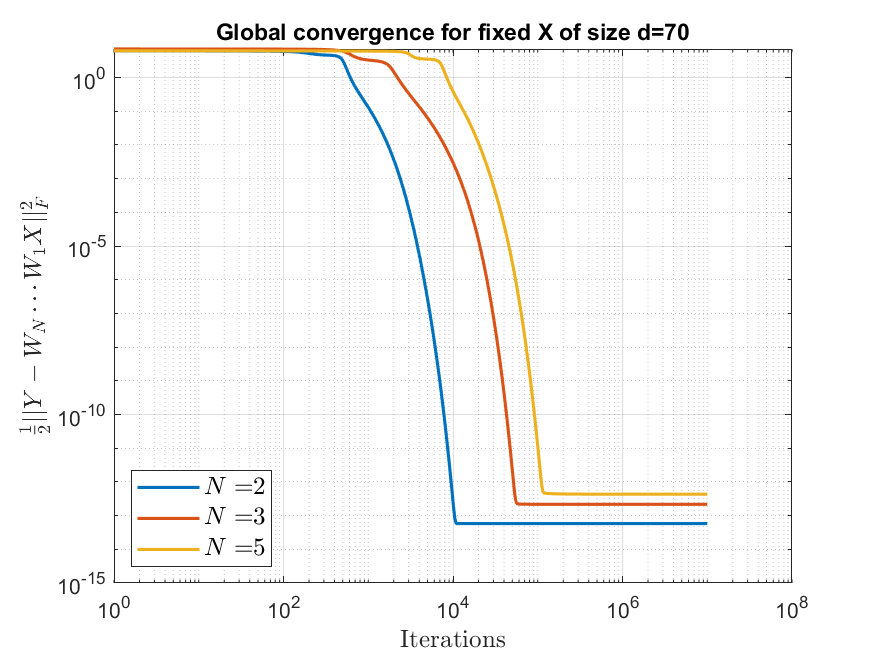}
					\caption{
						Constant step size $\eta$ meeting upper bound \eqref{bound:eta:numerical}}
					\label{fig:conv_suggested_delta_etak_cons_0,000773_0,000129_0,0000395}
				\end{subfigure}\hfill\begin{subfigure}{0.45\textwidth}
					\centering\includegraphics[width=0.99\textwidth]{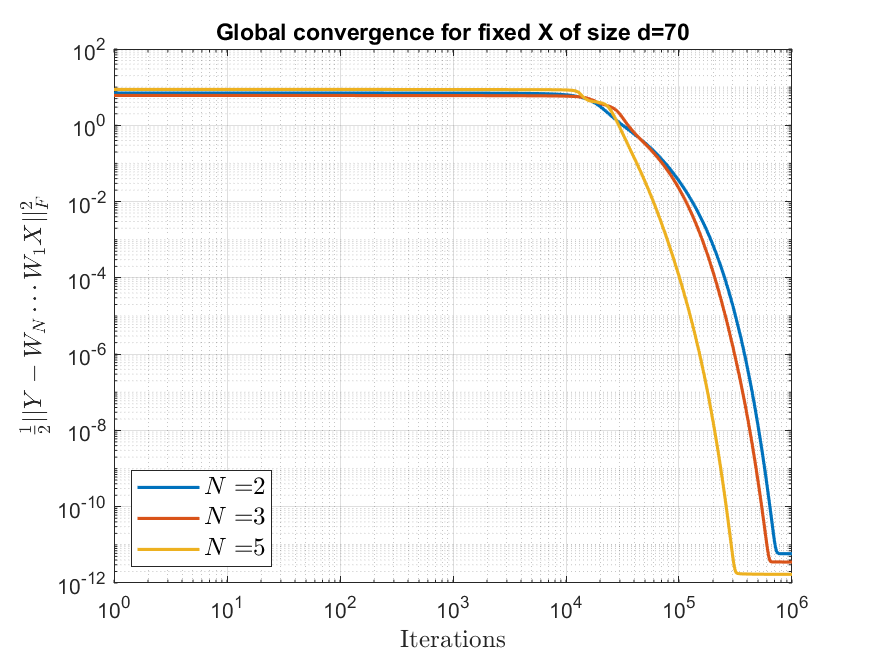}
					\caption{
						Constant step size $\eta=10^{-5}$}
					\label{fig:conv_const_suggested_delta_eta=10^(-5)}
				\end{subfigure}
				
				\begin{subfigure}{0.45\textwidth}
					\centering\includegraphics[width=0.99\textwidth]{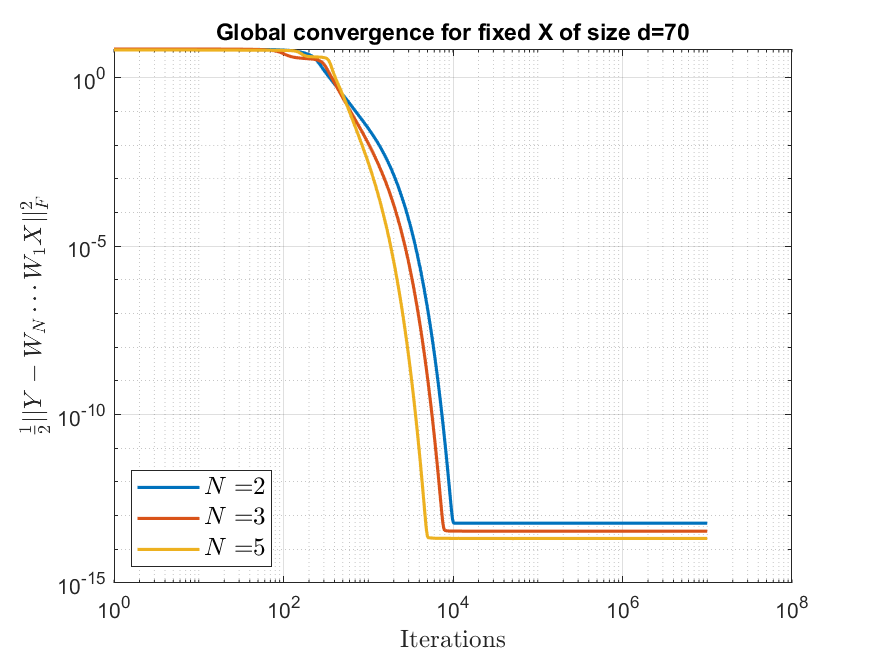}
					\caption{
						Constant step size $\eta=10^{-3}$}
					\label{fig:conv_const_suggested_delta_eta=10^(-3)}
				\end{subfigure}\hfill\begin{subfigure}{0.45\textwidth}
					\centering\includegraphics[width=0.99\textwidth]{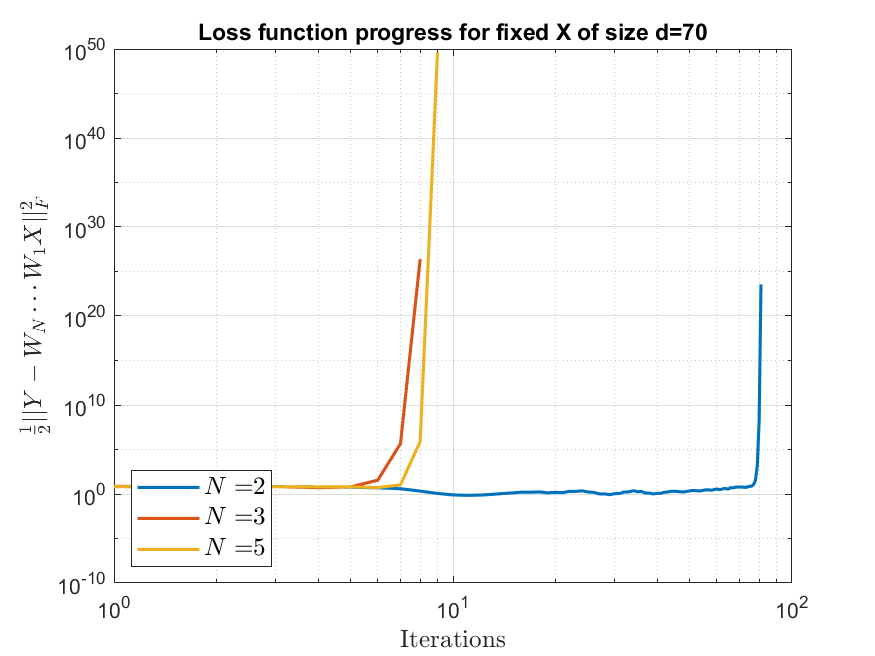}
					\caption{
						Constant step size $\eta=0.05$}
					\label{fig:conv_const_suggested_delta_eta=0.05}
				\end{subfigure}
				\caption{Progress of loss function $L^N$ for training linear networks via gradient descent for various values of the constant step size}\label{fig:const-stepsize}
			\end{figure}

			In our first set of experiments, we use a constant step size, i.e., $\eta_k = \eta$. Using $\alpha = 0$, the sufficient condition in Theorem~\ref{conv_critical_pt} reads
			\begin{equation}\label{bound:eta:numerical}
				\eta \leq \frac{2 \delta}{4 L^N(\overrightarrow{W}(0)) + \delta B_\delta},
			\end{equation}
			with $B_\delta$ in \eqref{def:B-delta}. We choose \[
			\delta = \frac{M^{\frac{2}{N}}}{N^3}.
			\]
				This slightly differs from the choice of $\delta$ suggested by Remark~\ref{rem1}(b), but  corresponds to the choice of $\delta$ that we would obtain at this point using the bound given by Remark \ref{Wj-bound-balancedness-improved} (instead of Proposition \ref{Wj-bound-balancedness}) allowing us to set $K_{\delta}= M^{\frac{2}{N}}+N^2$ (instead of $K_{\delta}= M^{\frac{2}{N}}+(N+1)^2$) in our results.
				
				In Figure~\ref{fig:const-stepsize} $L^N(\overrightarrow{W}(k))$ is plotted versus the iteration number. For the plot \ref{fig:conv_suggested_delta_etak_cons_0,000773_0,000129_0,0000395} the stepsize is chosen to exactly meet the upper bound in \eqref{bound:eta:numerical} (with $\delta =M^{2/N}/N^3$), resulting for this experiment in the values $\eta =7.73\cdot 10^{-4}$, $\eta = 1.29\cdot 10^{-4}$ and $\eta = 3.91\cdot 10^{-5}$ for depth $2,3$ and $5$, respectively.
				For the plot  \ref{fig:conv_const_suggested_delta_eta=10^(-5)}, the step size $\eta$ is chosen somewhat smaller than the upper bound in  \eqref{bound:eta:numerical}, while for plots \ref{fig:conv_const_suggested_delta_eta=10^(-3)} and \ref{fig:conv_const_suggested_delta_eta=0.05} the bound \eqref{bound:eta:numerical} is not satisfied. Since we observe convergence in plot  \ref{fig:conv_const_suggested_delta_eta=10^(-3)}, this suggests that the bound of Theorem~\ref{conv_critical_pt} may not be entirely sharp. 
				But increasing the step size beyond a certain value leads to divergence as suggested by plot~\ref{fig:stepsize_l=0.05stepsize_r=0.1suggested_delta_Page6_modified_etak_const_gamma=0.2}, so that some bound on the step size is necessary (see also \cite[Lemma A.1]{chou2021} for a necessary condition in a special case).

				\begin{figure}[t]
					\begin{subfigure}{0.45\textwidth}
						\centering\includegraphics[width=0.99\textwidth]{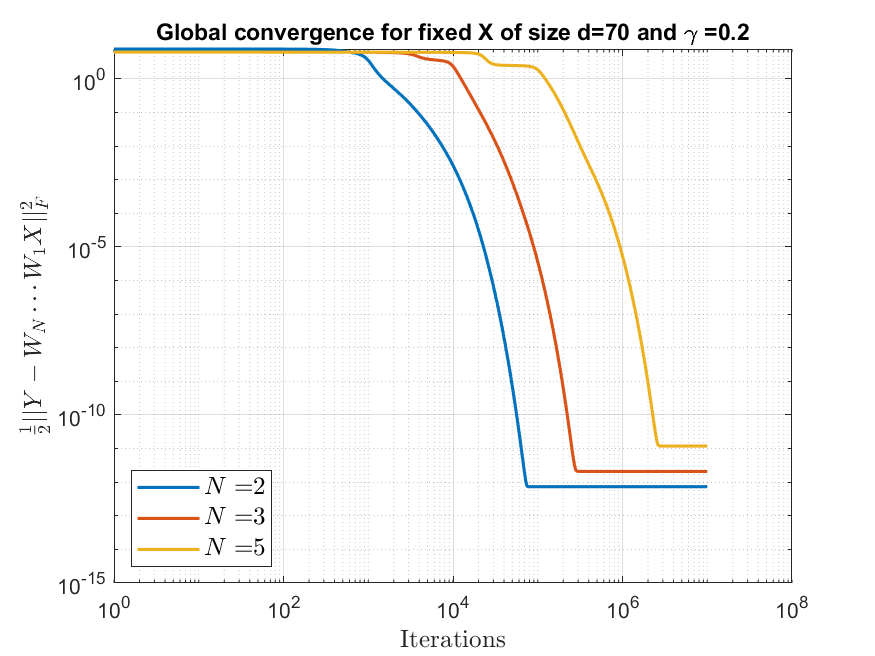}
						\caption{
							Step sizes $\eta_k$ in \eqref{etak_with_from_g} with $\gamma = 0.2$ and $a_1, a_2$ in \eqref{a1:a2:choice}}
						\label{fig:conv_const_suggested_delta_etak_gamma=0.2}
					\end{subfigure}\hfill\begin{subfigure}{0.45\textwidth}
						\centering\includegraphics[width=0.99\textwidth]{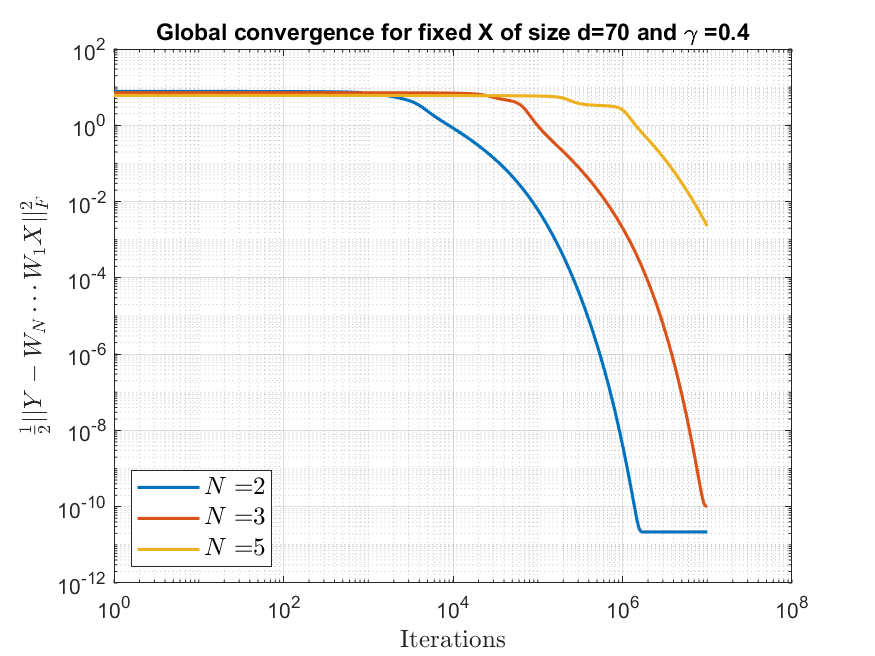}
						\caption{Step sizes $\eta_k$ in \eqref{etak_with_from_g} with $\gamma = 0.4$ and $a_1, a_2$ in \eqref{a1:a2:choice}}
					\label{fig:conv_const_suggested_delta_etak_gamma=0.4}
				\end{subfigure}
				\begin{subfigure}{0.45\textwidth}
					\centering\includegraphics[width=0.99\textwidth]{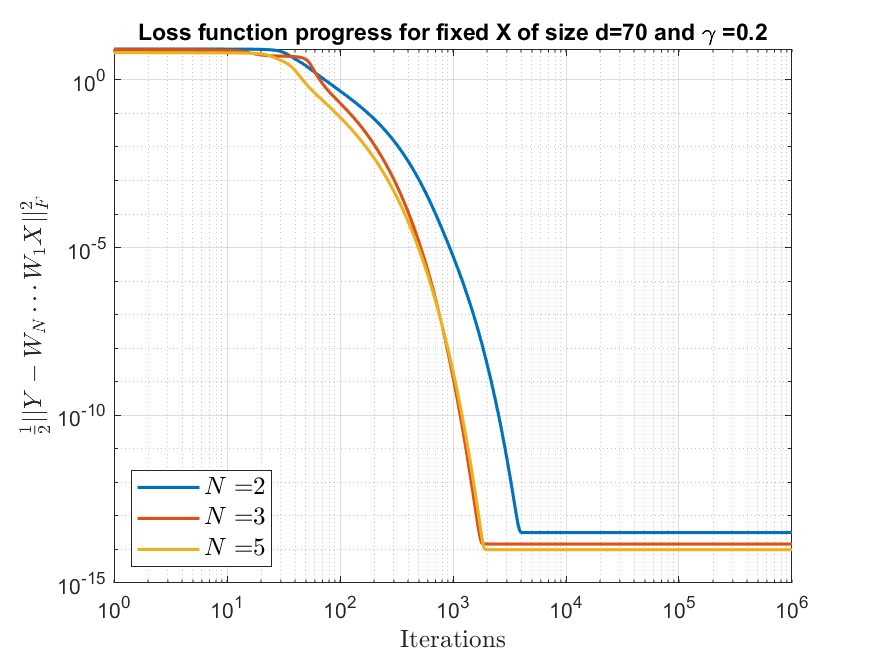}
					\caption{Step sizes $\eta_k$ in \eqref{etak_with_from_g} with $\gamma = 0.2$, $a_1=0.1$, $a_2=0.01$}
				\label{fig:stepsize_l=0.1stepsize_r=0.01suggested_delta_Page6_modified_etak_const_gamma=0.2}
			\end{subfigure}\hfill\begin{subfigure}{0.45\textwidth}\centering\includegraphics[width=0.99\textwidth]{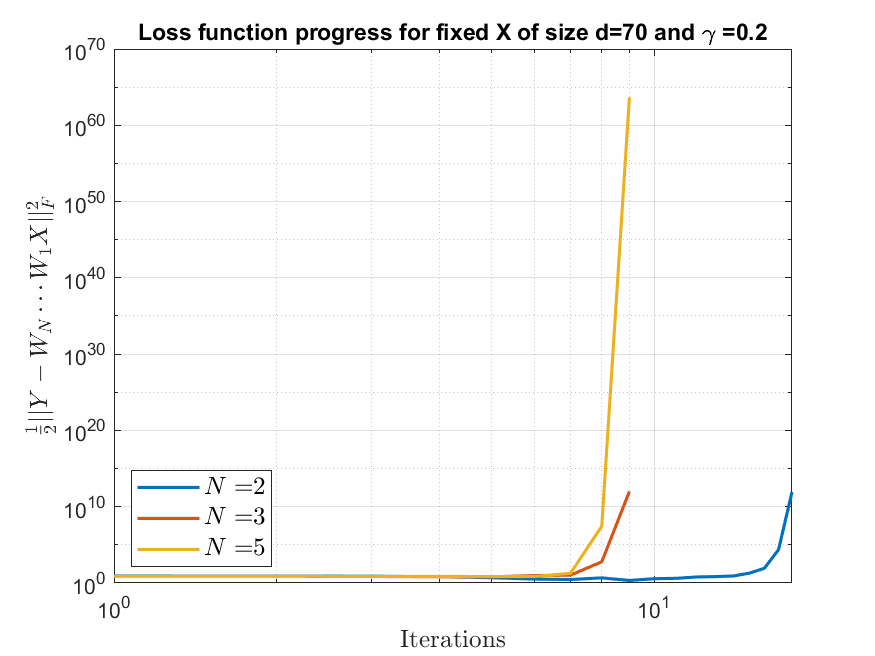}
				\caption{
					Step sizes $\eta_k$ in \eqref{etak_with_from_g} with $\gamma = 0.2$, $a_1=0.05$, $a_2=0.1$}
			\label{fig:stepsize_l=0.05stepsize_r=0.1suggested_delta_Page6_modified_etak_const_gamma=0.2}
		\end{subfigure}
		\caption{Gradient descent with decreasing step sizes $\eta_k$ as in \eqref{etak_with_from_g}}\label{fig:decreasing-stepsize}
\end{figure}

In our second set of experiments we use a sequence of step sizes $\eta_k$ that converges to zero at various speeds. For some decay rate $\gamma \geq 0$ and some constants $a_1, a_2$ we set
\begin{align}\label{etak_with_from_g}
	\eta_k=\min\left\{a_1 ,\frac{a_2}{(k+1)^{\gamma}} \right\}&\quad
	\gamma\geq 0,\quad \mbox{for all } k \in \mathbb{N}.
\end{align}
The upper bound of Theorem~\ref{conv_critical_pt} is satisfied for (see also the beginning of the  proof of Proposition \ref{prop1})
\begin{align}\label{a1:a2:choice}
	a_1= a_2 = \frac{2(1-\sigma)}{B_\delta},
	\quad\sigma = \frac{4 L^N(\overrightarrow{W}(0))}{4 L^N(\overrightarrow{W}(0)) +  \delta B_\delta}. 
\end{align}
Again, we choose $\delta =\frac{1}{N^3} M^{\frac{2}{N}}$, which corresponds to the choice of $\delta$ using the bound given in Remark~\ref{Wj-bound-balancedness-improved} when testing with these values for $a_1$ and $a_2$.

The plots in Figure~\ref{fig:decreasing-stepsize}  illustrate the convergence behavior for various choices of the constants $a_1$, $a_2$ and decay rate $\gamma$ in \eqref{etak_with_from_g}, for $N=2,3,5$.
Plot \ref{fig:conv_const_suggested_delta_etak_gamma=0.2} and \ref{fig:conv_const_suggested_delta_etak_gamma=0.4} both show convergence for the choices $a_1, a_2$ in \eqref{etak_with_from_g} and for $\gamma = 0.2$ and $\gamma = 0.4$ respectively, leading to step sizes satisfying the condition of Theorem~\ref{conv_critical_pt}.
(In these experiments, the resulting values of $a_1=a_2$ are $a_1 = 7.73 \cdot 10^{-4}$ for $N=2$, $a_1 = 1.29 \cdot 10^{-4}$ for $N=3$ and $a_1 = 3.91 \cdot 10^{-5}$ for $N=5$.) Comparing the two plots, as well as with the plots for constant step size in Figure~\ref{fig:const-stepsize}, shows that fast decay of the step size leads to slower convergence of gradient descent, as expected. Note that we observe that larger values of $\gamma$ are possible, but will further slow down convergence, so that we decided to omit the corresponding experiments here.

Plot~\ref{fig:stepsize_l=0.1stepsize_r=0.01suggested_delta_Page6_modified_etak_const_gamma=0.2} shows convergence for a decay rate of $y = 0.2$ even though the constants $a_1$ and $a_2$ are such that $\eta_k$ does not satisfy the bound of Theorem~\ref{conv_critical_pt} for all $k$, while further increasing the value of $a_2$ leads to divergence as illustrated 
in Plot~\ref{fig:stepsize_l=0.05stepsize_r=0.1suggested_delta_Page6_modified_etak_const_gamma=0.2}.

				\section*{Acknowledgements}
				
				All authors acknowledge funding by DAAD (German Foreign Exchange Service) through the project \textit{ Understanding stochastic gradient descent in deep learning} (grant no: 57417829).

				\bibliographystyle{abbrv}
				\bibliography{GradDescRef}
				
			\end{document}